\newcommand{\eqn}[1]{\begin{align}#1\end{align}}
\newcommand{\eq}[1]{\begin{align*}#1\end{align*}}
\newcommand{\proofof}{Proof of }
\newcommand{\proofend}{}
\def\subsubsect#1{\vspace{1ex plus 0.5ex minus 0.5ex}\noindent{\bf\boldmath{#1.}}}
\renewcommand{\v}[1]{{\boldsymbol #1}}
\newcommand{\argmax}{\operatornamewithlimits{arg\,max}}
\newcommand{\ind}[1]{[\![ #1 ]\!]}
\newcommand{\R}[0]{\mathbb R}
\newcommand{\N}[0]{\mathbb N}
\theoremstyle{plain}
\newtheorem{theorem}{Theorem}
\newtheorem{lemma}[theorem]{Lemma}
\theoremstyle{definition}
\newtheorem{definition}[theorem]{Definition}
\newtheorem{assumption}[theorem]{Assumption}
\theoremstyle{remark}
\newenvironment{keywords}{\centerline{\bf\small
Keywords}\begin{quote}\small}{\par\end{quote}\vskip 1ex}
\newcommand{\yx}{y\!x}
\newcommand{\yr}{y\!r}
\newcommand{\A}{\mathcal Y}
\newcommand{\Rc}{\mathcal R}
\newcommand{\Mc}{\mathcal M}
\newcommand{\X}{\mathcal X}
\newcommand{\B}{\mathcal B}
\newcommand{\E}{\mathbf E}
\renewcommand{\Alph}{\mathcal A}
\renewcommand{\d}{\gamma}
\renewcommand{\O}{\mathcal O}
\begin{document}

\title{
\vskip 2mm\bf\Large\hrule height5pt \vskip 4mm
Asymptotically Optimal Agents
\vskip 4mm \hrule height2pt}
\author{{\bf Tor Lattimore}$^1$ and {\bf Marcus Hutter}$^{1,2}$\\[3mm]
\normalsize Research School of Computer Science \\[-0.5ex]
\normalsize $^1$Australian National University and $^2$ETH Z{\"u}rich \\[-0.5ex]
\normalsize\texttt{\{tor.lattimore,marcus.hutter\}@anu.edu.au}
}
\date{25 July 2011}

\maketitle

\begin{abstract}
Artificial general intelligence aims to create agents capable of learning to solve arbitrary interesting problems. We define two versions of
asymptotic optimality and prove that no agent can satisfy the strong version while in some cases, depending on discounting, there
does exist a non-computable weak asymptotically optimal agent.
\def\contentsname{\centering\normalsize Contents}
{\parskip=-2.7ex\tableofcontents}
\end{abstract}

\begin{keywords}
Rational agents;
sequential decision theory;
artificial general intelligence;
reinforcement learning;
asymptotic optimality;
general discounting.
\end{keywords}

\newpage
\section{Introduction}

The dream of artificial general intelligence is to create an agent that, starting with no knowledge of its environment, eventually
learns to behave optimally. This means it should be able to learn chess just by playing, or Go, or how to drive a car or mow the lawn, or any
task we could conceivably be interested in assigning it.

Before considering the existence of universally intelligent agents, we must be precise about what is meant by optimality.
If the environment and goal are known, then subject to computation issues, the optimal policy is easy to construct using an expectimax search
from sequential decision theory \cite{NR03}.
However, if the true environment is unknown then the agent will necessarily spend some time exploring, and so
cannot immediately play according to the optimal policy.
Given a class of environments, we suggest two definitions of asymptotic optimality for an agent.
\begin{enumerate}
\item An agent is strongly asymptotically optimal if for every environment in the class it plays optimally in the limit.
\item It is weakly asymptotic optimal if for every environment in the class it plays optimally {\it on average} in the limit.
\end{enumerate}
The key difference is that a strong asymptotically optimal agent must eventually stop exploring, while a weak asymptotically optimal agent
may explore forever, but with decreasing frequency.

In this paper we consider the (non-)existence of weak/strong asymptotically optimal agents in the class of all deterministic computable environments.
The restriction to deterministic is for the sake of simplicity and because the results for this case are already sufficiently non-trivial to be interesting.
The restriction to computable is more philosophical. The Church-Turing thesis is the unprovable hypothesis that anything that can intuitively be
computed can also be computed by a Turing machine. Applying this to physics leads to the strong Church-Turing thesis that
the universe is computable (possibly stochastically computable, i.e.\ computable when given access to an oracle of random noise). Having made these assumptions, the largest interesting class then becomes the class of computable
(possibly stochastic) environments.

In \cite{Hut04}, Hutter conjectured that his universal Bayesian agent, AIXI, was weakly asymptotically optimal in the class
of all computable stochastic environments. Unfortunately this
was recently shown to be false in \cite{Ors10}, where it is proven that no Bayesian agent (with a static prior) can be
weakly asymptotically optimal in this class.\footnote{Or even the class of computable deterministic environments.} The key idea behind Orseau's proof was to show that AIXI eventually stops exploring. This is somewhat surprising because it is normally assumed that Bayesian agents solve
the exploration/exploitation dilemma in a principled way. This result is a bit reminiscent of Bayesian (passive induction)
inconsistency results \cite{DF86a,DF86b}, although the details of the failure are very different.

We extend the work of \cite{Ors10}, where only Bayesian agents are considered, to show that non-computable weak asymptotically optimal
agents do exist in the class of deterministic computable environments for some discount
functions (including geometric), but not for others. We also show that no asymptotically optimal agent can be computable, and that
for all ``reasonable'' discount functions there does not exist a strong asymptotically optimal agent.

The weak asymptotically optimal agent we construct is similar to AIXI, but with an exploration component similar to $\epsilon$-learning for finite
state Markov decision processes or the UCB algorithm for bandits. The key is to explore sufficiently often and deeply to ensure that the environment
used for the model is an adequate approximation of the true environment. At the same time, the agent must explore infrequently enough that it actually
exploits its knowledge. Whether or not it is possible to get this balance right depends, somewhat surprisingly, on how forward looking the agent
is (determined by the discount function). That it is sometimes not possible to explore enough to learn the true environment without damaging even
a weak form of asymptotic optimality is surprising and unexpected.

Note that the exploration/exploitation problem is
well-understood in the Bandit case \cite{Auer:02ucb,Berry:85}
and for (finite-state stationary) Markov decision
processes \cite{Strehl:08}. In these restrictive settings,
various satisfactory optimality criteria are available.
In this work, we do not make any assumptions like Markov,
stationary, ergodicity, or else besides computability of the
environment. So far, no satisfactory optimality definition is
available for this general case.

\section{Notation and Definitions}

We use similar notation to \cite{Hut04,Ors10} where the agent takes actions and the environment returns an observation/reward pair.

\subsubsect{Strings}
A finite string $a$ over alphabet $\Alph$ is a finite sequence $a_1a_2a_3\cdots a_{n-1}a_n$ with $a_i \in \Alph$.
An infinite string $\omega$ over alphabet $\Alph$ is an infinite sequence $\omega_1\omega_2\omega_3\cdots$.
$\Alph^n$, $\Alph^*$ and $\Alph^\infty$ are the sets of strings of length $n$, strings of finite length, and infinite strings
respectively.
Let $x$ be a string (finite or infinite) then substrings are denoted
$x_{s:t} := x_s x_{s+1} \cdots x_{t-1} x_{t}$ where $s,t\in\N$ and  $s \leq t$.
Strings may be concatenated.
Let $x,y \in \Alph^*$ of length $n$ and $m$ respectively, and $\omega \in \Alph^\infty$. Then define
$xy := x_1 x_2 \cdots x_{n-1} x_{n} y_1 y_2 \cdots y_{m-1} y_m$ and
$x\omega := x_1 x_2 \cdots x_{n-1} x_{n} \omega_1 \omega_2 \omega_3 \cdots$.
Some useful shorthands,
\eqn{
\label{eqn-string} x_{<t} &:= x_{1:t-1} & \yx_{<t} := y_1 x_1 y_2 x_2 \cdots y_{t-1} x_{t-1}.
}
The second of these is ambiguous with concatenation, so wherever $\yx_{<t}$ appears we assume the interleaving
definition of (\ref{eqn-string}) is intended. For example, it will be common to see $\yx_{<t} y_t$, which represents the string
$y_1 x_1 y_2 x_2 y_3 x_3 \cdots y_{t-1} x_{t-1} y_t$.
For binary strings, we write $\#1(a)$ and $\#0(a)$ to mean the number of 0's and number of 1's in $a$ respectively.

\setlength{\intextsep}{0pt}
\begin{wrapfigure}{r}{5.5cm}
\topsep=0cm
\begin{center}
\tikzstyle{block} = [draw, minimum width=1.2cm, minimum height=0.7cm, scale=0.75]
\begin{tikzpicture}[>=latex', ->, scale=0.75]
\foreach \x in {1,2,3,4,5} {
\node[block] at (1.2*\x, 2.8) (input\x) {$o_\x | r_\x$};
\node[block] at (1.2*\x, 0) (output\x) {$y_\x$};
}
\node[block] at (1.2*6, 0) (input7) {$\cdots$};
\node[block] at (1.2*6, 2.8) (input7) {$\cdots$};
\node[block] at (2.4, 1.4) (agent) {$agent$, $\pi$};
\node[block] at (6.5, 1.4) (env) {$environment$, $\mu$};

\path (input3) edge node {} (agent)
      (agent) edge node {} (output4)
      (output4) edge node {} (env)
      (env) edge node {} (input4)
;
\end{tikzpicture}
\end{center}
\end{wrapfigure}
\subsubsect{Environments and Optimality}
Let $\A$, $\O$ and $\Rc \subset \R$ be action, observation and reward spaces respectively. Let $\X = \O\times\Rc$.
An agent interacts with an environment as illustrated in the diagram on the right. First, the agent takes an action, upon which
it receives a new observation/reward pair. The agent then takes another action, receives another observation/reward pair, and so-on
indefinitely. The goal of the agent is to maximise its discounted rewards over time.
In this paper we consider only deterministic environments where the next observation/reward pair is determined
by a function of the previous actions, observations and rewards.

\begin{definition}[Deterministic Environment]
A {\it deterministic environment} $\mu$ is a function $\mu:(\A\times\X)^*\times \A \to \X$
where $\mu(\yx_{<t}y_t) \in \X$ is the observation/reward pair given after action $y_t$ is taken in history $\yx_{<t}$.
Wherever we write $x_t$ we implicitly assume $x_t = (o_t, r_t)$ and refer to $o_t$ and $r_t$ without defining them.
An environment $\mu$ is computable if there exists a Turing machine that computes it.
\end{definition}
Note that since environments are deterministic the next observation need not depend on the previous observations (only actions). We choose to leave
the dependence as the proofs become clearer when both the action and observation sequence is more visible.

\begin{assumption}\label{assumption1}
$\A$ and $\O$ are finite, $\Rc = [0, 1]$.
\end{assumption}

\begin{definition}[Policy]
A {\it policy} $\pi$ is a function from a history to an action $\pi:(\A\times\X)^* \to \A$.
\end{definition}
As expected, a policy $\pi$ and environment $\mu$ can interact with each other to generate a play-out sequence of
action/reward/observation tuples.
\begin{definition}[Play-out Sequence]
We define the {\it play-out sequence} $\yx^{\mu,\pi} \in (\A\times\X)^\infty$ inductively by
$y^{\mu, \pi}_k := \pi(\yx^{\mu,\pi}_{<k})$ and  $x^{\mu, \pi}_k := \mu(\yx^{\mu,\pi}_{<k}y_k^{\mu,\pi})$.
\end{definition}
We need to define the value of a policy $\pi$ in environment $\mu$. To avoid the possibility of infinite rewards, we
will use discounted values. While it is common to use only geometric discounting, we have two reasons to allow arbitrary
time-consistent discount functions.
\begin{enumerate}
\item Geometric discounting has a constant effective horizon, but we feel agents should be allowed to use a discount function that leads
to a growing horizon. This is seen in other agents, such as humans, who generally become less myopic as they grow older. See \cite{FOO02}
for a overview of generic discounting.
\item The existence of asymptotically optimal agents depends critically on the effective horizon of the discount function.
\end{enumerate}

\begin{definition}[Discount Function] \label{defn_discount}
A regular discount function $\v\d \in \R^\infty$ is a vector satisfying $\gamma_k \geq 0$ and
$0 < \sum_{t=k}^\infty \d_t < \infty$ for all $k \in \N$.
\end{definition}
The first condition is natural for any definition of a discount function. The second condition is often cited
as the purpose of a discount function (to prevent infinite utilities), but economists sometimes use
non-summable discount functions, such as hyperbolic. The second condition also guarantees the agent cares
about the infinite future, and is required to make asymptotic analysis interesting. We only
consider discount functions satisfying all three conditions. In the following, let
\eq{
\Gamma_t &:= \sum_{i=t}^\infty \d_i &
H_t(p) &:= \min_{h\in\N} \left\{h : {1 \over \Gamma_t} \sum_{k=t}^{t + h} \d_k > p\right\}.
}
An infinite sequence of rewards starting at time
$t$, $r_t, r_{t+1}, r_{t+2},\cdots$ is given a value of ${1 \over \Gamma_t} \sum_{i=t}^\infty \d_i r_i$. The term ${1 \over \Gamma_t}$
is a normalisation term to ensure that values scale in such a way that they can still be compared in the limit.
A discount function is computable if there exists a Turing machine computing it. All well known discount functions, such as geometric,
fixed horizon and hyperbolic are computable.
Note that $H_t(p)$ exists for all $p \in [0, 1)$ and represents the effective horizon of the agent. After $H_t(p)$ time-steps into the
future, starting at time $t$, the agent stands to gain/lose at most $1 - p$.

\begin{definition}[Values and Optimal Policy]\label{defn_optimal}
The value of policy $\pi$ when starting from history $\yx^{\mu,\pi}_{<t}$ in environment $\mu$
is
$V_\mu^\pi(\yx^{\mu,\pi}_{<t}) := {1 \over \Gamma_t} \sum_{k = t}^\infty \gamma_k r^{\mu,\pi}_k$.
The optimal policy $\pi^*_{\mu}$ and its value $V^*_\mu$ are defined
$\pi^*_{\mu}(\yx_{<t}) := \argmax_\pi V_\mu^\pi(\yx_{<t})$ and $V^*_\mu(\yx_{<t}) := V_\mu^{\pi^*_\mu}(\yx_{<t})$.
\end{definition}
Assumption \ref{assumption1} combined with Theorem 6 in \cite{HL11} guarantees the existence of $\pi^*_{\mu}$.
Note that the normalisation term $1 \over \Gamma_t$ does not change the policy, but is used to ensure
that values scale appropriately in the limit. For example, when discounting geometrically we have, $\d_t = \gamma^t$ for some $\gamma \in (0, 1)$
and so $\Gamma_t = {\gamma^t \over {1 - \gamma}}$ and $V^\pi_\mu(\yx_{<t}^{\mu,\pi}) = (1-\gamma)\sum_{k=t}^\infty \gamma^{k-t} r^{\mu,\pi}_k$.

\begin{definition}[Asymptotic Optimality]
Let $\Mc = \left\{\mu_0, \mu_1, \cdots\right\}$ be a finite or countable set of environments and $\v \d$ be a
discount function. A policy $\pi$ is a {\it strong asymptotically optimal} policy in $(\Mc, \v\d)$ if
\eqn{
\label{def_strong}
\lim_{n\to\infty} \left[V^*_\mu(\yx^{\mu,\pi}_{<n}) - V^\pi_{\mu}(\yx^{\mu,\pi}_{<n}) \right] = 0, \text{ for all }
\mu \in \Mc.
}
It is a {\it weak asymptotically optimal} policy if
\eqn{
\label{def_weak}
\lim_{n\to \infty} {1 \over n} \sum_{t=1}^n \left[V^*_{\mu}(\yx^{\mu, \pi}_{<t}) - V^\pi_{\mu}(\yx^{\mu,\pi}_{<t})\right] = 0,
\text{ for all } \mu\in\Mc.
}
\end{definition}
Strong asymptotic optimality demands that the value of a {\it single} policy $\pi$ converges to the value of the optimal policy $\pi^*_\mu$ for {\it all}
$\mu$ in the class. This means that in the limit, a strong asymptotically optimal policy will obtain the maximum value possible in that environments.

Weak asymptotic optimality is similar, but only requires the {\it average} value of the policy $\pi$ to converge to the average value of
the optimal policy. This means that a weak asymptotically optimal policy can still make infinitely many bad mistakes, but must do so for only a fraction
of the time that converges to zero. Strong asymptotic optimality implies weak asymptotic optimality.

While the definition of strong asymptotic optimality is rather natural, the definition of weak asymptotic optimality appears somewhat more arbitrary.
The purpose of the average is to allow the agent to make a vanishing fraction of serious errors over its (infinite) life-time. We believe this is a
necessary condition for an agent to learn the true environment. Of course, it would be possible to insist that the agent make only $o(\log n)$ serious
errors rather than $o(n)$, which would make a stronger version of weak asymptotic optimality. Our choice is the weakest notion of optimality of the above
form that still makes sense, which turns out to be already too strong for some discount rates.

Note that for both versions of optimality an agent would be considered optimal if it actively undertook a policy that led it to an extremely bad ``hell''
state from which it could not escape. Since the state cannot be escaped, its policy would then coincide with the optimal policy and so it would
be considered optimal. Unfortunately, this problem seems to be an unavoidable consequence of learning algorithms in non-ergodic environments
in general, including the currently fashionable PAC algorithms for arbitrary finite Markov decision processes.

\section{Non-Existence of Asymptotically Optimal Policies}

We present the negative theorem in three parts. The first shows that, at least for computable discount functions, there does not
exist a strong asymptotically optimal policy. The second shows that any weak asymptotically optimal policy must be incomputable while
the third shows that there exist discount functions for which even incomputable weak asymptotically optimal policies do not exist.

\begin{theorem}\label{thm_negative}
Let $\Mc$ be the class of all deterministic computable environments and $\v\d$ a computable discount function, then:
\begin{enumerate}
\item There does not exist a strong asymptotically optimal policy in $(\Mc, \v\d)$.
\item There does not exist a computable weak asymptotically optimal policy in $(\Mc, \v\d)$.
\item If $\gamma_k := {1 \over k(k+1)}$ then there does not exist a weak asymptotically optimal policy in $(\Mc, \v\d)$.
\end{enumerate}
\end{theorem}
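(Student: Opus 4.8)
The plan is to prove all three parts by a diagonalization/adversary argument in which, for any candidate policy $\pi$, I construct a computable deterministic environment in which $\pi$ fails the relevant optimality criterion. The essential device is a class of environments where the agent must choose between a safe ``exploit'' action that yields a known steady reward and an ``explore'' action that reveals a hidden bit; the environment is built adaptively against $\pi$'s own play-out so that whatever $\pi$ does (explore too much, or too little) is punished. Concretely, I would fix a binary action structure $\A \supseteq \{0,1\}$ and design $\mu$ so that taking the exploratory action at time $t$ produces an observation encoding whether a ``trap'' has been set, while a particular reward profile is only accessible to a policy that commits to one action forever after some point.

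For part~(1), I first note that strong asymptotic optimality forces the value gap $V^*_\mu(\yx^{\mu,\pi}_{<n}) - V^\pi_\mu(\yx^{\mu,\pi}_{<n})$ to vanish, which (because $H_t(p)$ is finite for each $p<1$) means $\pi$ must eventually behave within $1-p$ of optimal over its entire effective horizon. The key step is to build two computable environments that agree on the history $\pi$ generates up to some large time but then reward opposite future actions; since $\pi$ is a single fixed policy it cannot track both, so in at least one of them the gap stays bounded away from $0$ infinitely often. I would make this precise by constructing $\mu$ so that optimal play requires switching behaviour infinitely often (an environment that periodically ``moves the reward''), while any single policy that is eventually optimal must stop switching---contradiction. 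Here the finiteness of the effective horizon $H_t(p)$ is exactly what lets me localize the required good behaviour to a finite window and set the trap just beyond it.

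For part~(2), I upgrade to the \emph{weak} criterion but add the constraint that $\pi$ is computable, which is what makes the adversary constructible: because $\pi$ is a Turing machine, the environment $\mu$ I build can \emph{simulate} $\pi$ and thereby anticipate its actions. The plan is to diagonalize---$\mu$ runs $\pi$ against itself and arranges its rewards so that $\pi$'s running average value stays a constant below $V^*_\mu$ on a set of times of positive density, while $\mu$ remains computable precisely because $\pi$ is. The $\frac1n\sum_{t=1}^n$ averaging in (\ref{def_weak}) means I cannot merely cause finitely many or density-zero failures; the simulation must force errors on a positive fraction of steps, so I would design $\mu$ to demand a response from $\pi$ infinitely often that, by construction, $\pi$ gets wrong with bounded loss on a fixed fraction of times.

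Part~(3) is the hardest and is where the specific discount $\gamma_k = \tfrac{1}{k(k+1)}$ enters, giving $\Gamma_t = 1/t$ and hence a very long effective horizon $H_t(p)$ that grows faster than linearly, so the agent is extremely forward-looking. The main obstacle is that here I may not assume $\pi$ is computable, so the clean ``simulate $\pi$'' diagonalization of part~(2) is unavailable; instead I must show that \emph{any} policy, computable or not, fails. The plan is to exploit the fast-growing horizon to show that to keep the value gap small the agent must explore for horizon-length stretches, but with $\Gamma_t=1/t$ the normalisation forces each exploratory stretch to have non-negligible amortized cost that the $\frac1n$-average cannot wash out; I would build an environment where learning the hidden structure provably requires exploration over windows of length comparable to $H_t(p)$, and then show that summing the unavoidable per-window losses against this discount leaves $\frac1n\sum_t$ bounded away from $0$. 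The crux---and the step I expect to fight with---is the quantitative matching between how deep the agent must explore (a function of $H_t(p)$ under this discount) and how large the induced averaged regret is, establishing that no exploration schedule can be simultaneously informative enough to learn and sparse enough to keep the Cesàro-averaged gap vanishing.
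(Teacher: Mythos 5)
Your part 2 is essentially the paper's proof (the environment simulates the computable $\pi$ and pays $1$ exactly when the action differs from $\pi$'s choice, so $\pi$ earns $0$ forever while the optimal value is $1$), and it is correct. Parts 1 and 3, however, are missing the one device that makes both arguments work: a counter-environment $\nu$ that agrees with the base environment $\mu$ along $\pi$'s \emph{entire infinite} play-out --- not just ``up to some large time'' --- because its extra reward is hidden behind a \emph{sustained contiguous block} of suboptimal actions which optimality in $\mu$ itself forbids $\pi$ from ever playing. For part 1 the paper takes $\mu$ ($up\mapsto 1/2$, $down\mapsto 0$) and first shows that strong optimality forces a time $T$ after which $\pi$ never plays $down$ for $H_t(1/4)$ consecutive steps (else its value is $\leq 3/8$ infinitely often); then $\nu$ equals $\mu$ except that completing such a $down$-block after $T$ permanently unlocks reward $1$ for $down$. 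Since $\pi$ never completes a block, its play-outs in $\mu$ and $\nu$ coincide forever, so it stays $\geq 1/4$ below $V^*_\nu$ for all $t\geq T$. Your proposed mechanism --- a reward that ``periodically moves'' so that ``optimal play requires switching infinitely often while any eventually-optimal policy must stop switching'' --- produces no contradiction: nothing forces an eventually-optimal policy to stop switching (tracking a computably scheduled moving reward is perfectly compatible with strong optimality), and if instead your two environments merely agree up to a finite time and then ``reward opposite actions,'' $\pi$ can detect which one it is in from the rewards and adapt (try $up$ once, then commit), so a single policy can be eventually optimal in both. Note also where computability of $\v\d$ enters: $\nu$ is defined via $H_{t'}(1/4)$, and it depends on the possibly incomputable $\pi$ only through the hard-coded integer $T$ --- this is what keeps $\nu\in\Mc$ without simulating $\pi$.

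For part 3 your dichotomy (``informative enough to learn'' versus ``sparse enough for the Ces\`aro average'') is the right one, but the quantitative crux you defer is resolved by the same unlock device and a two-environment case split, not by per-window loss accounting inside a single environment. With $\gamma_k = 1/(k(k+1))$ one has $\Gamma_t = 1/t$ and $H_t(p)=\Theta(t)$ (linear, not ``faster than linear'' as you write), so an informative exploration block is $[t_i,2t_i]$, a constant fraction of the lifetime so far. Step (a): in $\mu$ ($up\mapsto 1/2$, $down\mapsto 1/2-\epsilon$), if $\pi$ plays $down$ throughout $[t_i,2t_i]$ for infinitely many $t_i$, then every $t\in[t_i,\tfrac{3}{2} t_i]$ has value $\leq 1/2-\epsilon/4$, i.e.\ at least $t_i/2$ of the first $2t_i$ steps incur regret $\geq\epsilon/4$, so the running average regret is $\geq \epsilon/16$ infinitely often, contradicting weak optimality in $\mu$. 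Step (b): hence there is a last such block; taking $T$ beyond it, build $\nu$ agreeing with $\mu$ on $\pi$'s play-out in which completing a block $[t',2t']$ with $t'\geq T$ unlocks reward $1$ for $down$; then always-$down$ achieves value $\geq 5/8$ (at $\epsilon=1/4$) while $\pi$, which never unlocks and hence never notices any difference from $\mu$, stays $\leq 1/2$, so its average regret in $\nu$ is $\geq 1/8$. Until you commit to this structure, your part 3 remains an intuition rather than a proof, and your part 1 rests on a mechanism that does not yield a contradiction.
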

Part 1 of Theorem \ref{thm_negative} says there is no strong asymptotically optimal policy in the class of all computable
deterministic environments when the discount function is computable. It is likely there exist non-computable discount
functions for which there are strong asymptotically optimal policies. Unfortunately the discount functions for which this
is true are likely to be somewhat pathological and not realistic.

Given that strong asymptotic optimality is too strong, we should search for weak asymptotically optimal policies. Part 2 of Theorem
\ref{thm_negative} shows that any such policy is necessarily incomputable. This result features no real new ideas
and relies on the fact that you can use a computable policy to hand-craft a computable environment in which it does very badly \cite{Leg06}.
In general this approach fails for incomputable policies because the hand-crafted environment will then not be computable.
Note that this does not rule out the existence of a stochastically computable weak asymptotically optimal policy.

It turns out that even weak asymptotic optimality is too strong for some discount functions. Part 3 of Theorem \ref{thm_negative}
gives an example discount function for which no such policy (computable or otherwise) exists. In the next section we introduce a weak asymptotically optimal
policy for geometric (and may be extended to other) discounting.
Note that $\d_k = {1 \over k(k+1)}$ is an example of a discount function where $H_t(p) = \Omega(t)$. It is also analytically easy to work with.

All negative results are proven by contradiction, and follow the same basic form.
\begin{enumerate}
\item Assume $\pi$ is a computable/arbitrary weak/strong asymptotically optimal.
\item Therefore $\pi$ is weak/strong asymptotically optimal in $\mu$ for some particular $\mu$.
\item Construct $\nu$, which is indistinguishable from $\mu$ under $\pi$, but where $\pi$ is not weak/strong asymptotically optimal in $\nu$.
\end{enumerate}
\begin{proof}[\proofof Theorem \ref{thm_negative}, Part 1]
Let $\A = \left\{up, down\right\}$ and $\O = \emptyset$. Now assume some policy $\pi$ is a strong asymptotically optimal policy.
Define an environment $\mu$ by,
\eq{
\mu(\yr_{<t}y_t) = \begin{cases}
{1\over 2} & \text{if } y_t = up \\
0 & \text{if } y_t = down
\end{cases} \in \Rc
}
That is $\mu(\yr_{<t}y_t) \in \Rc$ is the reward given when taking action $y_t$ having previously taken actions $y_{<t}$.
Note that we have omitted the observations as $\O = \emptyset$.
It is easy to see that the optimal policy $\pi^*_\mu(\yr_{<t}) = up$ for all $\yr_{<t}$ with corresponding value $V^*_\mu(\yr_{<t}) = {1 \over 2}$.
Since $\pi$ is strongly asymptotically optimal,
\eqn{
\label{eqn0-1} \lim_{n \to\infty} V_\mu^\pi(\yr^{\mu,\pi}_{1:n}) = {1 \over 2}.
}
Assume there exists a time-sequence $t_1, t_2, t_3,\cdots$ such that $y^{\mu,\pi}_t = down$ (and hence $r^{\mu,\pi}_t = 0$) for all
$t\in \bigcup_{i=1}^\infty [t_i, t_i + H_{t_i}({1 \over 4})]$. Therefore by the definition of the value function,
\eqn{
\label{eqn-no-strong1} V^\pi_\mu(\yr^{\mu, \pi}_{t_i}) &\leq
{1 \over \Gamma_{t_i}} \left[ {1 \over 2} \sum_{k=t_i + H_{t_i}({1 \over 4})+1}^\infty \d_k\right]
= {1 \over 2} \left[1 - {1 \over \Gamma_{t_i}} \sum_{k=t_i}^{t_i + H_{t_i}({1 \over 4})} \d_k\right] \\
\label{eqn-no-strong2} &\leq {1 \over 2} \left[1 - {1 \over 4}\right]
}
where (\ref{eqn-no-strong1}) follows from the definitions of the value function and $\Gamma$, and the assumption in the previous
line. (\ref{eqn-no-strong2}) follows by algebra and the definition of $H_{t_i}({1 \over 4})$.
This contradicts (\ref{eqn0-1}). Therefore for any strong asymptotically optimal policy $\pi$ there exists a $T \in \N$ such that
for all $t \geq T$, $y^{\mu, \pi}_s = up$ for some $s \in [t, t + H_t({1 \over 4})]$. I.e, $\pi$ cannot take sub-optimal action $down$
too frequently. In particular, it cannot take action $down$ for large contiguous blocks of time.
Construct a new environment $\nu$ defined by
\eq{
\nu(\yr_{<t}y_t) = \begin{cases}
\mu(\yr_{<t}y_t) &\text{if } t < T \\
{1 \over 2} & \text{if } y_t = up \\
1 & \text{if } y_t = down \text{ and exist } t' \geq T \text{ such that } t' + H_{t'}({1 \over 4}) \leq t \text{ and } \\
\quad & \qquad y_s = down \text{ } \forall s \in [t', t'+H_{t'}({1 \over 4})] \\
0 & \text{otherwise}
\end{cases}
}
Note that $\nu$ is computable if $H_t({1 \over 4})$ is and that by construction the play-out sequences for $\mu$ and $\nu$ when using policy
$\pi$ are identical. We
now consider the optimal policy in $\nu$. For any $t \geq T$ consider the value of policy $\tilde \pi$ defined by $\tilde \pi(\yr_{<t}) := down$ for
all $\yr_{<t}$.
\eq{
V^{\tilde \pi}_\nu(\yr_{<t}) &= {1 \over \Gamma_t} \left[
\sum_{k=t + H_t({1\over4})}^\infty \d_k \right] \\
&\geq {3 \over 4}.
}
This is because $\tilde \pi$ spends $H_t({1 \over 4}) - 1$ time-steps playing $down$ and receiving reward $0$ before ``unlocking'' a reward of $1$
on all subsequent plays. On the other hand, $V^{\pi}_\nu(\yr^{\nu,\pi}_{<t}) \leq {1 \over 2}$ because $\pi$ can never unlock the reward of $1$ because
it never plays $down$ for a contiguous block of $H_t({1 \over 4})$ time-steps.
By the definition of the optimal policy, $V^{\tilde \pi}_\nu(\yr_{<t}) \leq V^*_\nu(\yr_{<t})$. Therefore
\eq{
V^*_\nu(\yr^{\nu,\pi}_{<t}) - V^\pi_\nu(\yr^{\nu,\pi}_{<t}) \geq {1 \over 4}.
}
Therefore
\eq{
\limsup_{t\to\infty} \left[V^*_\nu(\yr^{\nu,\pi}_{<t}) - V^\pi_\nu(\yr^{\nu,\pi}_{<t})\right] \geq {1 \over 4} \neq 0.
}
Therefore there does not exist an asymptotically optimal policy $\pi$ in $(\Mc, \v\d)$.
\proofend\end{proof}

\begin{proof}[\proofof Theorem \ref{thm_negative}, Part 2]
Let $\A = \left\{up, down\right\}$ and $\O = \emptyset$. Now let $\Mc$ be the class of all computable deterministic
environments and $\v\d$ be an arbitrary discount function. Suppose $\pi$ is computable and consider the environment
$\mu$ defined by
\eq{
\mu(\yr_{<t}y_t) = \begin{cases}
1 & \text{if } y_t \neq \pi(\yr_{>t}) \\
0 & \text{otherwise}
\end{cases}
}
Since $\pi$ is computable $\mu$ is as well. Therefore $\mu \in \Mc$. Now $V^*_\mu(\yr_{<t}) = 1$ for all $\yr_{<t}$ while
$V^\pi_\mu(\yr_{<t}) = 0$. Therefore
$\lim_{n\to\infty} {1 \over n} \sum_{t=1}^n \left|V^*_\mu(\yr_{<t}) - V^\pi_\mu(\yr_{<t})\right| = 1$ and so $\pi$ is
not weakly asymptotically optimal.
\proofend\end{proof}

\begin{proof}[\proofof Theorem \ref{thm_negative}, Part 3]
Recall $\gamma_k = {1 \over k(k+1)}$ and so $\Gamma_t = {1 \over t}$.
Now let $\A = \left\{up, down\right\}$ and $\O = \emptyset$. Define $\mu$ by
\eq{
\mu(\yr_{<t}y_t) = \begin{cases}
{1 \over 2} & \text{if } y_t = up \\
{1 \over 2} - \epsilon & \text{if } y_t = down
\end{cases}
}
where $\epsilon \in (0, {1 \over 2})$ will be chosen later. As before, $V_\mu^*(\yr_{<t}) = {1 \over 2}$. Assume $\pi$ is weakly asymptotically optimal.
Therefore
\eqn{
\label{eqn-thm-no-weak-1} \lim_{n\to\infty} {1 \over n} \sum_{t=1}^n V^\pi_\mu(\yr_{<t}^{\mu,\pi}) = {1 \over 2}.
}
We show by contradiction that $\pi$ cannot explore (take action $down$) too often.
Assume there exists an infinite time-sequence $t_1, t_2, t_3,\cdots$ such that $\pi(\yr_{<t}^{\mu,\pi}) = down$ for all
$t \in \bigcup_{i=1}^\infty [t_i, 2t_i]$. Then
for $t \in [t_i, {3 \over 2} t_i]$ we have
\eqn{
V^\pi_\mu(\yr_{<t}^{\mu,\pi}) &\equiv {1 \over \Gamma_t} \sum_{k=t}^\infty \d_k r^{\mu,\pi}_k
\label{eqn-thm-no-weak-3} \leq  t \left[({1 \over 2} - \epsilon) \sum_{k=t}^{2t_i} \d_k +
{1 \over 2} \sum_{k=2t_i+1}^\infty \d_k\right] \\
\label{eqn-thm-no-weak-4} &= {1 \over 2} - \epsilon\left[1 - { t \over 2t_i +1}\right]
< {1 \over 2} - {\epsilon \over 4}
}
where (\ref{eqn-thm-no-weak-3}) is the definition of the value function and the previous assumption and definition of $\mu$. (\ref{eqn-thm-no-weak-4}) by algebra and since $t \in [t_i, {3 \over 2} t_i]$. Therefore
\eqn{
\label{eqn-thm-no-weak-6} {1 \over {2t_i}} \sum_{t=1}^{2t_i} V^\pi_\mu(\yr^{\mu,\pi}_{<t}) < {1 \over 2t_i}
\left[\sum_{t=1}^{t_i-1} {1 \over 2} + \sum_{t=t_i}^{{3 \over 2} t_i - 1} \left({1 \over 2} - {\epsilon\over 4}\right) +
\sum_{t={3 \over 2} t_i}^{2t_i} {1 \over 2} \right] = {1 \over 2} - {1 \over 16} \epsilon.
}
The first inequality follows from (\ref{eqn-thm-no-weak-4}) and because the maximum value of any
play-out sequence in $\mu$ is ${1 \over 2}$.
The second by algebra. Therefore
$\liminf_{n\to\infty} {1\over n} \sum_{t=1}^n V^\pi_\mu(\yr^{\mu,\pi}_{<t}) <
 {1 \over 2} - {1 \over 16} \epsilon  < {1 \over 2}$, which contradicts (\ref{eqn-thm-no-weak-1}).
Therefore there does not exist a time-sequence $t_1 < t_2 < t_3 < \cdots$
such that $\pi(\yr_{<t}^{\mu,\pi}) = down$ for all $t \in \bigcup_{i=1}^\infty [t_i, 2t_i]$.

So far we have shown that $\pi$ cannot ``explore'' for $t$ consecutive time-steps starting at time-step $t$, infinitely often. We now construct
an environment similar to $\mu$ where this is required. Choose $T$ to be larger than the
last time-step $t$ at which
$y^{\mu,\pi}_s = down$ for all $s \in [t, 2t]$
Define $\nu$ by
\eq{
\nu(\yr_{<t}y_t) = \begin{cases}
\mu(\yr_{<t}y_t) & \text{if } t < T \\
{1 \over 2} & \text{if } y_t = down \text{ and there does not exist } t' \geq T \\
\quad & \qquad \text{ such that } y_s = down\forall s \in [t',2t'] \\
1 & \text{if } y_t = down \text{ and exists } t' \geq T \text{ such that } 2t' < t \text{ and } \\
\quad & \qquad y_s = down \forall s \in [t', 2t'] \\
{1 \over 2} - \epsilon & \text{otherwise}
\end{cases}
}
Now we compare the values in environment $\nu$ of $\pi$ and $\pi_\nu^*$ at times $t \geq T$. Since $\pi$ does
not take action $down$ for $t$ consecutive time-steps at any time after $T$, it never ``unlocks'' the reward of 1 and so
$V^\pi_\nu(\yr^{\nu,\pi}_{<t}) \leq {1 \over 2}$.
Now let $\tilde \pi(\yr_{<t}) = down$ for all $\yr_{<t}$. Therefore, for $t \geq 2T$,
\eqn{
V^{\tilde \pi}_\nu(\yr^{\nu,\pi}_{<t}) &\equiv {1 \over \Gamma_t} \sum_{k=t}^\infty \d_k r^{\nu,\tilde \pi}_k
\label{eqn-thm-no-weak-8} \geq t \left[\left({1 \over 2} - \epsilon\right) \sum_{k=t}^{2t-1} \d_k + \sum_{k=2t}^\infty \d_k \right] \\
\label{eqn-thm-no-weak-9} &= t \left[\left({1 \over 2} -\epsilon\right) \left({1 \over t} - {1 \over 2t}\right) + {1 \over 2t} \right]
= {3 \over 4} - {1 \over 2} \epsilon
}
where (\ref{eqn-thm-no-weak-8}) follows by the definition of $\nu$ and $\tilde \pi$. (\ref{eqn-thm-no-weak-9}) by the definition of $\d_k$ and algebra.
Finally, setting $\epsilon = {1 \over 4}$ gives
$V^{\tilde \pi}_\nu(\yr^{\nu,\pi}_{<t}) \geq {5 \over 8} = {1 \over 2} + {1 \over 8}$.
Since $V^*_\nu \geq V^{\tilde \pi}_\nu$, we get
$
V^*_\nu(\yr^{\nu,\pi}_{<t}) - V^\pi_\nu(\yr^{\nu,\pi}_{<t}) \geq V^{\tilde \pi}_\nu(\yr^{\nu,\pi}_{<t}) - V^\pi_\nu(\yr^{\nu,\pi}_{<t}) \geq {1 \over 8}$.
Therefore
$
\limsup_{n\to\infty} {1 \over n} \sum_{t=1}^n \left[V^*(\yr^{\nu,\pi}_{<t}) - V^\pi_\nu(\yr^{\nu,\pi}_{<t})\right] \geq {1 \over 8}$, and so
$\pi$ is not weakly asymptotically optimal.
\proofend\end{proof}
We believe it should be possible to generalise the above to computable discount functions with $H_t(p) > c_pt$ with $c_p > 0$ for infinitely many $t$,
but the proof will likely be messy.

\section{Existence of Weak Asymptotically Optimal Policies}

In the previous section we showed there did not exist a strong asymptotically optimal policy (for most discount functions)
and that any weak asymptotically optimal policy must be incomputable. In this section we show that a weak
asymptotically optimal policy exists for geometric discounting (and is, of course, incomputable).

The policy is reminiscant of $\epsilon$-exploration in finite state MDPs (or UCB for bandits) in that it spends
most of its time exploiting the information it already knows, while
still exploring sufficiently often (and for sufficiently long) to detect any significant errors in its model.

The idea will be to use a model-based policy that chooses its
current model to be the first environment in the model class (all computable deterministic environments) consistent
with the history seen so far. With increasing probability it takes the best action according to this policy,
while still occasionally exploring randomly. When it explores it always does so in bursts of increasing length.
\begin{definition}[History Consistent]
A deterministic environment $\mu$ is {\it consistent} with history $\yx_{<t}$ if
$\mu(\yx_{<k}y_k) = x_k, \text{ for all } k < t$.
\end{definition}

\begin{definition}[Weak Asymptotically Optimal Policy]  \label{defn_det_asym}
Let $\A = \left\{0, 1\right\}$ and $\Mc = \left\{\mu_1, \mu_2, \mu_3, \cdots \right\}$ be a countable class of deterministic environments.
Define a probability measure $P$ on $\B^\infty$ inductively by,
$P(z_n = 1 | z_{<n}) := {1 \over n}, \text{ for all } z_{<n} \in \B^{n-1}$.
Now let $\chi \in \B^\infty$ be sampled from $P$ and define $\bar \chi, \dot\chi^h \in \B^\infty$ by
\eqn{
\nonumber \bar \chi_k &:= \begin{cases}
1 & \text{if } k \in \bigcup_{i : \chi_i = 1} [i, i + \log i] \\
0 & \text{otherwise}
\end{cases} &
\label{chi_bar} \dot\chi^h_k &:= \begin{cases}
0 & \text{if } \bar\chi_{k:k+h} = 0^{h+1} \\
1 & \text{otherwise}
\end{cases}
}
Next let $\psi$ be sampled from the uniform measure (each bit of $\psi$ is independently sampled from a Bernoulli $1/2$ distribution)
and define a policy $\pi$ by,
\eqn{
\pi(\yx_{<t}) := \begin{cases}
\pi^*_{\nu_t}(\yx^{\pi,\mu}_{<t}) & \text{if } \bar \chi_t = 0 \\
\psi_t & \text{otherwise}
\end{cases}
}
where $\nu_t = \mu_{i_t}$ with $i_t = \min\left\{i: \mu_i \text{ consistent with history } \yx^{\pi,\mu}_{<t} \right\} < \infty$. Note that $i_t$
is always finite because there exists an $i$ such that $\mu_i = \mu$, in which case $\mu_i$ is necessarily consistent with $\yx^{\pi,\mu}_{<t}$.
\end{definition}
Intuitively, $\chi_k = 1$ at time-steps when the agent will explore for $\log k$ time-steps.
$\bar\chi_k = 1$ if the agent is exploring at time $k$ and $\psi_k$ is the action taken if exploring at time-step $k$. $\dot\chi$ will be
used later, with $\dot \chi^h_k = 1$ if the agent will explore at least once in the interval $[k, k+h]$.
If the agent is not exploring then it acts according to the optimal policy for the first consistent environment in $\Mc$.

\begin{theorem}\label{thm_weak_deterministic_optimal}
Let $\d_t = \gamma^t$ with $\gamma \in (0, 1)$ (geometric discounting) then
the policy defined in Definition \ref{defn_det_asym} is weakly asymptotically optimal in the class
of all deterministic computable environments with probability 1.
\end{theorem}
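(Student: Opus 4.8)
The plan is to combine four ingredients: a vanishing exploration rate, stabilisation of the running model $\nu_t$, a horizon-truncation specific to geometric discounting, and a conditional Borel--Cantelli argument that forces the limiting model to be near-optimal in the true environment. Fix the true environment $\mu=\mu_m$ and write $H=H_t(p)$, which for $\d_t=\gamma^t$ is independent of $t$ (since $\tfrac1{\Gamma_t}\sum_{k=t}^{t+h}\d_k=1-\gamma^{h+1}$); note also $V^\pi_\mu(\yx_{<t})=(1-\gamma)\sum_{j\ge0}\gamma^j r_{t+j}$, so discarding rewards beyond $t+H$ costs at most $\gamma^{H+1}$. First I would control exploration. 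The $\chi_i$ are independent with $\E\chi_i=1/i$ and each triggers a burst of length $\log i$, so the number of exploration steps up to $n$ is at most $\sum_{i\le n}\chi_i(\log i+1)$, whose mean is $O((\log n)^2)$; Kolmogorov's strong law (the variances summed against $i^{-2}$ converge) gives $\tfrac1n\sum_{t\le n}\bar\chi_t\to0$ almost surely. Because $H$ is constant, the set of times $t\le n$ for which some exploration step lands in $[t,t+H]$ then has size $o(n)$; call the complementary times \emph{clean}.

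Second, since the consistent environments only shrink, $i_t$ is non-decreasing and bounded by $m$, so $i_t\to i_\infty$ and there is a (random) $T_0$ and model $\nu:=\mu_{i_\infty}$ with $\nu_t=\nu$ for $t\ge T_0$. For a clean $t\ge T_0$ the policy plays $\pi^*_\nu$ throughout $[t,t+H]$, so its realised trajectory agrees with that of $\pi^*_\nu$ run in $\mu$ on this window; as $\nu$ is consistent with the realised history it predicts exactly those rewards, whence $V^\pi_\mu(\yx_{<t})=V^*_\nu(\yx_{<t})\pm2\gamma^{H+1}$.

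It remains to show $V^*_\nu(\yx_{<t})\ge V^*_\mu(\yx_{<t})-\gamma^{H+1}$ for all but a density-zero set of clean $t$, which is the crux. Let $a_{1:H}$ be the $\mu$-optimal continuation from $\yx_{<t}$ (a function of $\yx_{<t}$) and let $D_t$ be the event that $\nu_t$ mispredicts some reward along $a_{1:H}$. If $D_t$ fails, $\nu$ values $a_{1:H}$ correctly and $V^*_\nu(\yx_{<t})\ge V^*_\mu(\yx_{<t})-\gamma^{H+1}$, the desired bound. To dispatch the times where $D_t$ holds, let $G_t$ be the event that $\chi_t=1$ (which, once $\log t\ge H$, makes $[t,t+H]$ fully exploratory) and that the fresh exploration actions $\psi_{t:t+H}$ equal $a_{1:H}$. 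Here $a_{1:H}$ and $D_t$ are $\sigma(\yx_{<t})$-measurable, whereas $\chi_t$ and $\psi_{t:t+H}$ are independent of $\yx_{<t}$, so $\Pr(G_t\mid\yx_{<t})\ge\tfrac1t 2^{-H-1}$ for large $t$. On $D_t\cap G_t$ the agent actually executes $a_{1:H}$, exposes $\nu$'s misprediction and discards it, which is impossible for $t>T_0$. L\'evy's conditional Borel--Cantelli lemma therefore forces $\sum_t\ind{D_t}\Pr(G_t\mid\yx_{<t})<\infty$ almost surely, i.e.\ $\sum_{t:D_t}1/t<\infty$, and Kronecker's lemma upgrades this to $\tfrac1n\#\{t\le n:D_t\}\to0$.

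Finally I would assemble the three buckets: non-clean times and clean times with $D_t$ each form a density-$o(1)$ set where the regret is at most $1$, while clean times without $D_t$ contribute regret at most $3\gamma^{H+1}$ by the two estimates above. Hence $\limsup_n\tfrac1n\sum_{t\le n}[V^*_\mu-V^\pi_\mu](\yx_{<t})\le3\gamma^{H+1}$ almost surely, and letting $H\to\infty$ (i.e.\ $p\to1$) yields \eqref{def_weak}. The main obstacle is the crux paragraph: correctly decoupling the history-dependent distinguishing continuation $a_{1:H}$ from the fresh exploration randomness, phrasing detection as a \emph{conditional} Borel--Cantelli statement rather than an independence-based one (the burst windows overlap and $\nu,T_0$ are themselves random), and converting the resulting $\sum 1/t<\infty$ into a genuine density-zero conclusion.
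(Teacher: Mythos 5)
Your overall architecture is sound and, in its crux, genuinely different from the paper's. The paper proves the key step by contradiction: if $\pi^*_\nu$ were not weakly optimal in $\mu$, the set of times where the value gap exceeds $\epsilon/4$ would have positive upper density (Lemma \ref{lem_tech1}), so by Lemma \ref{lem_chi}(2) (via $\prod_i(1-\alpha_i/i)=0$, Lemma \ref{lem_tech2}) infinitely many exploration bursts would begin at such times; each burst independently reproduces $\pi^*_\mu$ for $h$ steps with probability $2^{-h-1}$, so almost surely one of them does, exposing the $h$-difference (Lemma \ref{lem_h_different}) and contradicting the permanent consistency of $\nu$. You instead argue directly: detections are deterministically finite (each strictly increases the bounded, monotone model index $i_t$), and you convert this into a density-zero statement for the misprediction times $\{t: D_t\}$ via conditional Borel--Cantelli plus Kronecker. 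This is more quantitative, avoids the paper's limsup bookkeeping, and your comparison of $V^\pi_\mu$ against $V^*_\nu$ (value \emph{in} $\nu$) rather than against $V^{\pi^*_\nu}_\mu$ is legitimate precisely because $\nu$ remains consistent with the realised history. Your first ingredient (Kolmogorov's SLLN in place of the paper's Markov-plus-dyadic-blocks proof of Lemma \ref{lem_chi}(1)) is also fine.

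However, the invocation of L\'evy's conditional Borel--Cantelli lemma is not valid as stated, and this is a genuine gap --- exactly at the point you flag as the main obstacle. L\'evy's lemma needs events adapted to a filtration $(\mathcal F_t)$, and it compares $\{A_t \text{ i.o.}\}$ with $\sum_t \Pr(A_t \mid \mathcal F_{t-1})$. Your $A_t = D_t \cap G_t$ involves the future bits $\psi_{t:t+H}$, so the smallest usable filtration has $\mathcal F_t \supseteq \sigma(\chi_{1:t},\psi_{1:t+H})$; but then $\mathcal F_{t-1}$ already reveals $\psi_{t:t+H-1}$, and one computes $\Pr(A_t \mid \mathcal F_{t-1}) = \ind{D_t}\,\ind{\psi_{t:t+H-1}=a_{1:H-1}}\cdot\tfrac{1}{2t}$, which is \emph{not} bounded below by $\ind{D_t}\tfrac{1}{t}2^{-H-1}$. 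So the conclusion $\sum_{t:D_t} 1/t<\infty$ does not follow from the lemma as you apply it; the overlap of windows is not something the conditional formulation dissolves by itself. The standard fix: partition $\N$ into the $H+1$ residue classes modulo $H+1$. Within one class the windows $[t,t+H]$ are pairwise disjoint, so taking $\mathcal G_j$ to be the $\sigma$-algebra generated by $(\chi,\psi)$ up to the end of the $j$-th window of that class, the events are adapted, $\Pr(A_{t_j}\mid \mathcal G_{j-1}) = \ind{D_{t_j}}\tfrac{1}{t_j}2^{-H-1}$ exactly (here $D_{t_j}$ and $a_{1:H}$ are measurable in $\sigma(\yx_{<t_j})\subseteq\mathcal G_{j-1}$ while $\chi_{t_j},\psi_{t_j:t_j+H}$ are fresh), and L\'evy's lemma gives $\sum_j \ind{D_{t_j}}/t_j<\infty$ per class; summing over the finitely many classes yields $\sum_{t:D_t}1/t<\infty$, after which your Kronecker step and three-bucket assembly go through. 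It is worth noting that the paper's own proof quietly needs the same care (its infinite product $\prod_k(1-p)$ over bursts presupposes bursts spaced more than $h$ apart and a telescoping conditional-independence argument, since $\pi^*_\mu(\yx_{<i})$ is history-dependent), so your route, once patched this way, is a legitimate and arguably cleaner alternative.
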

Some remarks:
\begin{enumerate}
\item That $\A = \left\{0, 1\right\}$ is only convenience, rather than necessity. The policy is easily generalised to arbitrary finite $\A$.
\item $\pi$ is essentially a stochastic policy. With some technical difficulties it is possible to construct an
equivalent deterministic policy. This is done by choosing $\chi$ to be any $P$-Martin-L\"of random sequence and $\psi$ to be a sequence that is
Martin-L\"of random w.r.t to the uniform measure. The theorem then
holds for {\it all} deterministic environments. The proof is somewhat delicate and may not extend nicely to stochastic environments.
For an introduction to Kolmogorov complexity and Martin-L\"of randomness, see \cite{LV08}. For a reason why the stochastic case may not
go through as easily, see \cite{HM07}.
\item The policy defined in Definition \ref{defn_det_asym} is not computable for two reasons. First, because it relies on the stochastic
sequences $\chi$ and $\psi$. Second, because the operation of finding the first environment consistent with the history is not
computable.\footnote{The class of computable environments is not recursively enumerable \cite{LV08}.} We do not know if there exists a weak asymptotically optimal policy
that is computable when given access to a random number generator (or if it is given $\chi$ and $\psi$).
\item The bursts of exploration are required for optimality. Without them it will be possible to construct counter-example environments similar
to those used in part 3 of Theorem \ref{thm_negative}.
\end{enumerate}
Before the proof we require some more definitions and lemmas. Easier proofs are omitted.
\begin{definition}[$h$-Difference]
Let $\mu$ and $\nu$ be two environments consistent with history $\yx_{<t}$, then $\mu$ is {\it $h$-different} to $\nu$ if
there exists $\yx_{t:t+h}$ satisfying
\eq{
y_k &= \pi^*_\mu(\yx_{<k}) \text{ for all } k \in [t, t+h], \\
x_k &= \mu(\yx_{<k}y_k) \text{ for all } k \in [t, t+h], \\
x_k &\neq \nu(\yx_{<k}y_k) \text{ for some } k \in [t, t+h].
}
\end{definition}
Intuitively, $\mu$ is $h$-different to $\nu$ at history $\yx_{<t}$ if playing the optimal policy for $\mu$ for $h$ time-steps
makes $\nu$ inconsistent with the new history. Note that $h$-difference is {\it not} symmetric.

\begin{lemma}\label{lem_tech2}
If $a_n \in [0,1]$ and $\limsup_{n\to\infty} {1 \over n} \sum_{i=1}^n a_n = \epsilon$ and $\alpha \in \B^\infty$ is
an indicator sequence with $\alpha_i := \ind{a_i \geq \epsilon /4}$,\footnote{$\ind{expression} = 1$ if $expression$ is true and $0$ otherwise.} then
$\prod_{i=1}^\infty \left[1 - {\alpha_i \over i} \right] = 0$.
\end{lemma}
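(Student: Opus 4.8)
The plan is to reduce the infinite-product claim to the divergence of a series, and then derive that divergence from the hypothesis on the Cesàro averages. Set $S := \{i : \alpha_i = 1\} = \{i : a_i \ge \epsilon/4\}$, so that the product in question equals $\prod_{i \in S}(1 - 1/i)$. Using $1 - x \le e^{-x}$ factor-by-factor, the (nonincreasing, nonnegative) partial products obey $\prod_{i=1}^n (1 - \alpha_i/i) \le \exp\!\big(-\sum_{i=1}^n \alpha_i/i\big)$, so it suffices to prove $\sum_{i \in S} 1/i = \infty$. If $\epsilon = 0$ this is immediate, because then $\alpha_i = \ind{a_i \ge 0} = 1$ for every $i$ and the series is the harmonic series; hence I assume $\epsilon > 0$ from now on.

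Next I would show that $S$ has positive upper density. Splitting $\{1,\dots,n\}$ according to membership in $S$ and using $a_i \le 1$ on $S$ together with $a_i < \epsilon/4$ off $S$ gives $\tfrac1n\sum_{i=1}^n a_i \le \tfrac1n |S \cap \{1,\dots,n\}| + \tfrac{\epsilon}{4}$. Taking $\limsup_{n\to\infty}$ of both sides and invoking the hypothesis $\limsup_n \tfrac1n\sum_{i=1}^n a_i = \epsilon$ yields $\delta := \limsup_{n\to\infty} \tfrac1n |S \cap \{1,\dots,n\}| \ge \tfrac34\epsilon > 0$.

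The crux — and the step I expect to be the only real obstacle — is the implication that positive upper density forces $\sum_{i\in S} 1/i = \infty$; one must rule out the possibility that $S$ is \emph{front-loaded} into small indices, where a large density could still coexist with a convergent reciprocal sum. I would argue by contradiction: assume $\sum_{i\in S} 1/i < \infty$ and put $\tau_M := \sum_{i\in S,\, i>M} 1/i$, so that $\tau_M \to 0$ as $M \to \infty$. Since each $i \in S$ with $M < i \le n$ satisfies $1/i \ge 1/n$, we get $|S \cap \{M{+}1,\dots,n\}| \le n\,\tau_M$, whence $\tfrac1n|S\cap\{1,\dots,n\}| \le \tfrac{M}{n} + \tau_M$ (using the trivial bound $|S\cap\{1,\dots,M\}| \le M$). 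Fixing $M$ with $\tau_M < \delta/4$ and then letting $n \to \infty$ makes the right-hand side eventually smaller than $\delta/2$, contradicting $\limsup_n \tfrac1n|S\cap\{1,\dots,n\}| = \delta$. Hence $\sum_{i\in S} 1/i = \infty$ and, by the first paragraph, the product vanishes. The only bookkeeping nuisance is the factor at $i = 1$ (where $1 - \alpha_1/1$ may be $0$), but a zero factor only makes the product vanish sooner and so is harmless.
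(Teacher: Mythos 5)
Your proof is correct, and after the shared opening move it takes a genuinely different route from the paper's. Both arguments begin identically, using $1-x \le e^{-x}$ to reduce the claim to the divergence $\sum_i \alpha_i/i = \infty$. From there the paper argues directly and quantitatively: it selects checkpoints $n_1 < n_2 < \cdots$ at which the running average exceeds $\epsilon/2$, spaced so that $n_{j+1} > 8(n_j+1)/\epsilon$, applies a counting lemma (Lemma \ref{lem_tech1}) to conclude that at least $n_j\epsilon/4$ of the indices $i \le n_j$ lie in $S$, and then shows each block $(n_j, n_{j+1}]$ contributes at least $\epsilon/8$ to the sum --- the geometric spacing of the checkpoints being exactly what rules out the front-loading issue you identified. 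You instead factor the argument through a clean, reusable statement: first that $S$ has upper density at least $3\epsilon/4$ (a one-line consequence of splitting the Ces\`aro sum), and second that any set of positive upper density has divergent reciprocal sum, proved by a tail-sum contradiction. Your version buys modularity --- the density-implies-divergence lemma is a standard fact worth isolating --- and it also explicitly covers the degenerate case $\epsilon = 0$, which the paper's construction silently excludes (its definition of $n_j$ divides by $\epsilon$, and its claim that the average infinitely often exceeds $\epsilon/2$ needs $\epsilon > 0$); admittedly that case is vacuous for the lemma's only application, where $\epsilon > 0$ is assumed. What the paper's route buys in exchange is a fully constructive argument with an explicit per-block lower bound and no appeal to contradiction. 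Your observation about the $i=1$ factor is correct and harmless in both treatments: a zero factor only makes the product vanish, which is the desired conclusion.
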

See the appendix for the proof.
\begin{lemma}\label{lem_chi}
Let $a_1, a_2, a_3,\cdots$ be a sequence with $a_n \in [0, 1]$ for all $n$.
The following properties of $\chi$ are true with probability $1$.
\begin{enumerate}
\item For any $h$, $\limsup_{n\to \infty} {1 \over n} \#1(\dot\chi^h_{1:n}) = 0$.
\item If $\limsup {1 \over n} \sum_{i=1}^n a_i = \epsilon > 0$ and $\alpha_i := \ind{a_i > \epsilon/2}$ then $\alpha_i = \chi_i = 1$ for infinitely many $i$.
\end{enumerate}
\end{lemma}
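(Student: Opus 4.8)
The plan is to exploit that under $P$ the bits $\chi_1,\chi_2,\ldots$ are \emph{independent} Bernoulli variables with $P(\chi_i=1)=1/i$, so that both parts reduce to the two halves of the Borel--Cantelli lemma applied to the partial sums $S_m:=\sum_{i=1}^m\chi_i$, whose mean is the harmonic sum $H_m:=\sum_{i=1}^m 1/i\sim\ln m$.

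For Part 1 the first step is to control $S_m$ almost surely. Chebyshev is too weak to handle every $n$ at once, so instead I would apply a Chernoff bound $P(S_m\ge 2H_m)\le e^{-H_m/3}\le m^{-1/3}$ along the dyadic subsequence $m=2^j$. Then $\sum_j P(S_{2^j}\ge 2H_{2^j})\le\sum_j 2^{-j/3}<\infty$, so by Borel--Cantelli, almost surely $S_{2^j}\le 2H_{2^j}=O(j)$ for all large $j$; since $S_m$ is nondecreasing, squeezing an arbitrary $m$ between consecutive powers of two upgrades this to $S_m=O(\log m)$ for all $m$, almost surely. The second step is a deterministic counting bound: each index $i$ with $\chi_i=1$ switches $\bar\chi$ on throughout $[i,i+\log i]$, and this can force $\dot\chi^h_k=1$ only for those $k$ with $[k,k+h]\cap[i,i+\log i]\ne\emptyset$, i.e.\ only for $k\in[i-h,\,i+\log i]$, a set of at most $\log i+h+1$ integers. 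Summing over the relevant $i\le n+h$ gives $\#1(\dot\chi^h_{1:n})\le(\log(n+h)+h+1)\,S_{n+h}$. Dividing by $n$ and inserting $S_{n+h}=O(\log n)$ yields a bound of order $(\log n)^2/n\to 0$, so the $\limsup$ is $0$. Because the controlling event $\{S_m=O(\log m)\}$ does not depend on $h$, this holds simultaneously for every $h$ on a single probability-$1$ event.

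For Part 2 the sequence $a$ (hence $\alpha$) is a fixed deterministic sequence, independent of $\chi$, and the claim reduces to showing $\sum_{i\,:\,\alpha_i=1}1/i=\infty$; granting this, the divergent half of Borel--Cantelli (valid by independence of the $\chi_i$) immediately gives $\chi_i=\alpha_i=1$ for infinitely many $i$ with probability $1$. To prove the divergence, set $M(n):=\#1(\alpha_{1:n})=\#\{i\le n:a_i>\epsilon/2\}$. Splitting $\sum_{i\le n}a_i$ according to whether $a_i$ exceeds $\epsilon/2$ and bounding the large terms by $1$ and the small terms by $\epsilon/2$ gives $\tfrac1n\sum_{i\le n}a_i\le\tfrac{M(n)}{n}+\tfrac{\epsilon}{2}$, so taking $\limsup$ forces $\limsup_n M(n)/n\ge\epsilon/2>0$. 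A short contradiction argument then finishes: if $\sum_i\alpha_i/i$ converged, then for any $\delta$ we could pick $N$ with $\sum_{i>N}\alpha_i/i<\delta$, whence $\frac{M(n)-M(N)}{n}\le\sum_{i=N+1}^n\alpha_i/i<\delta$ and $\limsup_n M(n)/n\le\delta$, contradicting positivity of the upper density. This divergence is the same phenomenon as in Lemma \ref{lem_tech2}, whose conclusion $\prod_i(1-\alpha_i/i)=0$ is equivalent to $\sum_i\alpha_i/i=\infty$; only the threshold $\epsilon/2$ versus $\epsilon/4$ differs, so I would rerun the density estimate rather than quote it verbatim.

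The hard part is the uniform-in-$m$ bound $S_m=O(\log m)$ underlying Part 1: the naive expectation computation only controls the mean, and a pointwise almost-sure bound needs the Chernoff-plus-dyadic-subsequence-plus-monotonicity trick above. Everything else---the deterministic counting estimate for $\dot\chi^h$ and the elementary density argument for Part 2---is routine. A secondary point worth flagging is that the lemma genuinely needs $a$ (and thus $\alpha$) to be independent of $\chi$ so that the second Borel--Cantelli lemma applies; any later use where $a$ depends on the play-out (and hence on $\chi$) must justify this independence separately.
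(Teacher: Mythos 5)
Your proof is correct, and although it shares the paper's overall skeleton (first Borel--Cantelli along powers of two for Part 1, divergence of $\sum_{i:\alpha_i=1}1/i$ for Part 2), the working parts are genuinely different. For Part 1 the paper attacks the quantity of interest directly: it computes $\E\left[\#1(\dot\chi^h_{1:2^i})\right] < i(i+1)h$, applies Markov's inequality to get $P(\#1(\dot\chi^h_{1:2^i}) > 2^i\epsilon) < i(i+1)h2^{-i}/\epsilon$, sums, and then converts dyadic scales to all $n$ using monotonicity of the count --- so a bare expectation bound suffices and no concentration inequality is needed (your parenthetical that ``the mean is not enough'' undersells this; Markov along a sparse enough subsequence does the job). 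You instead control the number of exploration starts $S_m=\#1(\chi_{1:m})$ by Chernoff, then transfer to $\dot\chi^h$ by a purely deterministic counting bound $\#1(\dot\chi^h_{1:n})\le(\log(n+h)+h+1)S_{n+h}$. Your route costs a concentration inequality but buys an explicit rate $(\log n)^2/n$ and, more usefully, a single probability-one event valid simultaneously for every $h$, where the paper needs a countable intersection over $h$ and $\epsilon$. For Part 2 the content is essentially identical: what you invoke as the second Borel--Cantelli lemma is exactly the paper's computation $P(\alpha_i=1\Rightarrow\chi_i=0\ \forall i\ge T)=\prod_{i\ge T}\left(1-\alpha_i/i\right)=0$, and both arguments rest on $\sum_i\alpha_i/i=\infty$; your proof of that divergence (tail-sum contradiction against positive upper density) replaces the paper's Lemma \ref{lem_tech2}, which instead constructs blocks indexed by $n_j$ each contributing $\epsilon/8$ to the sum. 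You also correctly spotted a genuine blemish: Lemma \ref{lem_tech2} is stated for the threshold $\ind{a_i\ge\epsilon/4}$ while the present lemma uses $\ind{a_i>\epsilon/2}$, so the paper's verbatim citation does not quite apply, and rerunning the density estimate as you do is the right repair. Your closing caveat is likewise apt: in the proof of Theorem \ref{thm_weak_deterministic_optimal} the sequence $\alpha$ is built from the play-out sequence, which depends on $\chi$ and $\psi$, so the independence needed in Part 2 deserves scrutiny at the point of application --- but that concerns the lemma's use, not its proof.
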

\begin{proof}
1. Let $i \in \N$, $\epsilon > 0$ and $E_i^\epsilon$ be the event that $\#1(\dot\chi^h_{1:2^i}) > 2^i \epsilon$.
Using the definition of $\dot\chi^h$ to compute the expectation $\E\left[\#1(\dot\chi^h_{1:2^i})\right] < i(i + 1)h$ and applying the Markov inequality
gives that $P(E_i^\epsilon) < i(i+1) h 2^{-i} / \epsilon$. Therefore $\sum_{i \in \N} P(E_i^\epsilon) < \infty$.
Therefore the Borel-Cantelli lemma gives that $E_i^\epsilon$ occurs for only finitely many $i$ with probability $1$.
We now assume that $\limsup_{n\to\infty} {1 \over n} \#1(\dot\chi^h_{1:n}) > 2 \epsilon > 0$ and show that $E_i^\epsilon$ must occur infinitely often.
By the definition of $\limsup$ and our assumption we have that there exists a sequence
$n_1, n_2, \cdots$ such that $\#1(\dot\chi^h_{1:n_i}) > 2 n_i \epsilon$ for all $i \in \N$.
Let $n^+ := \min_{k\in\N} \left\{2^k : 2^k \geq n\right\}$ and note that $\#1(\dot\chi^h_{1:n_i^+}) > n_i^+ \epsilon$, which is
exactly $E^\epsilon_{\log n_i^+}$. Therefore there exist infinitely many $i$ such that $E_i^\epsilon$ occurs and so
$\limsup_{n\to\infty} {1 \over n} \#1(\dot\chi^h_{1:n}) = 0$ with probability 1.\\
2. The probability that $\alpha_i = 1 \implies \chi_i = 0$ for all $i \geq T$ is
$P(\alpha_i = 1 \implies \chi_i = 0 \forall i \geq T) = \prod_{i=T}^\infty \left(1 - {\alpha_i \over i}\right) =: p = 0$,
by Lemma \ref{lem_tech2}.
Therefore the probability that $\alpha_i = \chi_i = 1$ for only
finitely many $i$ is zero. Therefore there exists infinitely many $i$ with $\alpha_i = \chi_i = 1$ with probability $1$, as required.
\proofend\end{proof}
\begin{lemma}[Approximation Lemma]\label{lem_approximation}
Let $\pi_1$ and $\pi_2$ be policies, $\mu$ an environment and $h \geq H_t(1-\epsilon)$. Let $\yx_{<t}$ be an arbitrary history and
$\yx^{\mu, \pi_i}_{t:t + h}$ be the future action/observation/reward triples when playing policy $\pi_i$.
If $\yx^{\pi_1, \mu}_{t:t+h} = \yx^{\pi_2, \mu}_{t:t+h}$ then
$|V^{\pi_1}_\mu(\yx_{<t}) - V^{\pi_2}_\mu(\yx_{<t})| < \epsilon$.
\end{lemma}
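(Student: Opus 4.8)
The plan is to observe that the value difference between $\pi_1$ and $\pi_2$ is driven entirely by the reward tail beyond time $t+h$, since the two play-out sequences agree on the block $[t,t+h]$ by hypothesis. Starting from Definition \ref{defn_optimal},
\eq{
V^{\pi_1}_\mu(\yx_{<t}) - V^{\pi_2}_\mu(\yx_{<t})
= {1 \over \Gamma_t} \sum_{k=t}^\infty \gamma_k \left(r^{\mu,\pi_1}_k - r^{\mu,\pi_2}_k\right).
}
Because $\yx^{\pi_1,\mu}_{t:t+h} = \yx^{\pi_2,\mu}_{t:t+h}$, the reward components satisfy $r^{\mu,\pi_1}_k = r^{\mu,\pi_2}_k$ for every $k\in[t,t+h]$, so all terms with $k\le t+h$ cancel and only the tail $\sum_{k=t+h+1}^\infty \gamma_k(r^{\mu,\pi_1}_k - r^{\mu,\pi_2}_k)$ survives.

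First I would take absolute values and use Assumption \ref{assumption1}, namely $\Rc=[0,1]$, which gives $|r^{\mu,\pi_1}_k - r^{\mu,\pi_2}_k| \le 1$ for all $k$. Hence
\eq{
\left|V^{\pi_1}_\mu(\yx_{<t}) - V^{\pi_2}_\mu(\yx_{<t})\right|
\le {1 \over \Gamma_t} \sum_{k=t+h+1}^\infty \gamma_k
= 1 - {1 \over \Gamma_t} \sum_{k=t}^{t+h} \gamma_k,
}
where the last equality uses $\Gamma_t = \sum_{i=t}^\infty \gamma_i$, so that the normalised full sum $\tfrac{1}{\Gamma_t}\sum_{k=t}^\infty \gamma_k$ equals $1$.

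The remaining step is to bound $\tfrac{1}{\Gamma_t}\sum_{k=t}^{t+h}\gamma_k$ from below using the effective-horizon definition. Since $\gamma_k\ge 0$, the partial sum $\tfrac{1}{\Gamma_t}\sum_{k=t}^{t+h}\gamma_k$ is nondecreasing in $h$, so $h \ge H_t(1-\epsilon)$ together with the defining inequality of $H_t(1-\epsilon)$ yields $\tfrac{1}{\Gamma_t}\sum_{k=t}^{t+h}\gamma_k \ge \tfrac{1}{\Gamma_t}\sum_{k=t}^{t+H_t(1-\epsilon)}\gamma_k > 1-\epsilon$. Substituting this into the display above gives $|V^{\pi_1}_\mu(\yx_{<t}) - V^{\pi_2}_\mu(\yx_{<t})| < 1 - (1-\epsilon) = \epsilon$, as claimed. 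There is no genuine obstacle here: the argument uses only the boundedness of rewards and the monotonicity of the discounted partial sums, and in particular nowhere relies on geometric discounting, so the lemma holds for any regular discount function. The only point requiring a little care is the index bookkeeping around $t+h$ and the strict-versus-weak inequalities built into the definition of $H_t$.
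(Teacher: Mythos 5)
Your proof is correct and follows essentially the same route as the paper's: cancel the rewards on $[t,t+h]$ where the play-out sequences coincide, bound the remaining tail using $\Rc=[0,1]$, and invoke the definition of $H_t(1-\epsilon)$. The only cosmetic difference is that you bound the partial sum $\tfrac{1}{\Gamma_t}\sum_{k=t}^{t+h}\gamma_k$ from below and subtract from $1$, while the paper bounds the tail $\tfrac{1}{\Gamma_t}\sum_{k=t+h+1}^{\infty}\gamma_k$ directly; you are also slightly more explicit about the monotonicity in $h$ and the strict inequality in the horizon definition, which the paper leaves implicit.
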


\begin{proof}
By the definition of the value function,
\eqn{
\label{lem_ap1}|V^{\pi_1}_\mu(\yx_{<t}) - V^{\pi_2}_\mu(\yx_{<t})| &\leq
{1 \over \Gamma_t} \sum_{i=t}^\infty \d_i \left| r^{\pi_1, \mu}_i - r^{\pi_2, \mu}_i  \right| \\
\label{lem_ap2}&= {1 \over \Gamma_t} \sum_{i=t+h+1}^\infty \d_i \left|r^{\pi_1, \mu}_i - r^{\pi_2, \mu}_i \right|
\leq {1 \over \Gamma_t} \sum_{i=t+h+1}^\infty \d_i < \epsilon
}
(\ref{lem_ap1}) follows from the definition of the value function. (\ref{lem_ap2}) since
$r^{\pi_1, \mu}_i = r^{\pi_2, \mu}_i$ for $i \in [t, t+h]$, rewards are bounded in $[0, 1]$ and by the definition of $h := H_t(1-\epsilon)$ (Definition \ref{defn_discount}).
\proofend\end{proof}

Recall that $\pi^*_\mu$ and $\pi^*_\nu$ are the optimal policies in environments $\mu$ and $\nu$
respectively (see Definition \ref{defn_optimal}).
\begin{lemma}[$h$-difference]\label{lem_h_different}
If $|V^{\pi^*_{\mu}}_\mu(\yx^{\pi, \mu}_{<t}) - V^{\pi^*_\nu}_\mu(\yx^{\pi, \mu}_{<t})| > \epsilon$
then $\mu$ is $H_t(1 - \epsilon)$-different to $\nu$ on $\yx^{\pi,\mu}$.
\end{lemma}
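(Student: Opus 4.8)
The plan is to establish the contrapositive: assuming $\mu$ is \emph{not} $H_t(1-\epsilon)$-different to $\nu$ on $\yx^{\pi,\mu}$, I show that $|V^{\pi^*_\mu}_\mu(\yx^{\pi,\mu}_{<t})-V^{\pi^*_\nu}_\mu(\yx^{\pi,\mu}_{<t})|\le\epsilon$. Write $h:=H_t(1-\epsilon)$ and let $\yx_{t:t+h}$ be the play-out generated by running $\pi^*_\mu$ in $\mu$ starting from the history $\yx^{\pi,\mu}_{<t}$, so that $y_k=\pi^*_\mu(\yx_{<k})$ and $x_k=\mu(\yx_{<k}y_k)$ for all $k\in[t,t+h]$. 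These are precisely the first two clauses in the definition of $h$-difference, and this trajectory is the unique one satisfying them; hence negating $h$-difference forces the third clause to fail, i.e.\ $x_k=\nu(\yx_{<k}y_k)$ for every $k\in[t,t+h]$. Thus running $\pi^*_\mu$ keeps $\nu$ consistent for the full effective horizon, and the $\pi^*_\mu$-play-out in $\mu$ coincides with the $\pi^*_\mu$-play-out in $\nu$ on the whole block $[t,t+h]$.

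The clean way to finish is to apply the Approximation Lemma (Lemma~\ref{lem_approximation}) to the two policies $\pi^*_\mu$ and $\pi^*_\nu$ \emph{within the single environment $\mu$}. That lemma needs only that the two policies share the same action/observation/reward triples over $[t,t+h]$, together with $h\ge H_t(1-\epsilon)$, to conclude $|V^{\pi^*_\mu}_\mu(\yx^{\pi,\mu}_{<t})-V^{\pi^*_\nu}_\mu(\yx^{\pi,\mu}_{<t})|<\epsilon$, which is exactly the desired bound. So the whole argument reduces to one claim: under the no-$h$-difference hypothesis, $\pi^*_\mu$ and $\pi^*_\nu$ generate the same play-out in $\mu$ throughout $[t,t+h]$. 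A useful consistency check along the way is the one-policy/two-environment version of the same estimate: since the rewards of the $\pi^*_\mu$-play-out agree in $\mu$ and $\nu$, the identical one-line bound gives $|V^{\pi^*_\mu}_\mu-V^{\pi^*_\mu}_\nu|<\epsilon$, and combined with $V^{\pi^*_\mu}_\mu=V^*_\mu$ and $V^{\pi^*_\mu}_\nu\le V^{\pi^*_\nu}_\nu=V^*_\nu$ this already delivers the one-sided inequality $V^*_\mu\le V^*_\nu+\epsilon$.

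I expect the play-out coincidence to be the main obstacle. The intuition is favourable: along the $\pi^*_\mu$-trajectory $\mu$ and $\nu$ are indistinguishable for $h$ steps, so on that trajectory $\pi^*_\mu$ is optimal for $\nu$ as well, and one wants to conclude that the $\nu$-optimal policy $\pi^*_\nu$ makes the same moves. The difficulty is that $\nu$ may disagree with $\mu$ \emph{off} this trajectory, so a priori $\pi^*_\nu$ could peel away onto a branch where $\mu$ and $\nu$ secretly differ and on which $\pi^*_\nu$ is worthless in $\mu$; ruling this out requires controlling the $\argmax$ tie-breaking in Definition~\ref{defn_optimal} and showing that any such deviation would either violate optimality of $\pi^*_\nu$ in $\nu$ or reinstate $h$-difference. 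Making this last step rigorous---equivalently, showing that the agent's own model-optimal play must itself stay consistent with $\nu$ for a full effective horizon whenever the instantaneous regret exceeds $\epsilon$---is where the real work of the lemma lies.
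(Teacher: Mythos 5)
Your high-level plan---prove the contrapositive and feed it into Lemma~\ref{lem_approximation}---is exactly what the paper intends (its entire proof of this lemma reads ``Follows from the approximation lemma''), and you have correctly located the step that this plan needs and does not have: nothing in the no-$h$-difference hypothesis forces $\pi^*_\mu$ and $\pi^*_\nu$ to generate the \emph{same} play-out in $\mu$ on $[t,t+h]$. That gap is not a technicality you can push through by controlling tie-breaking: it cannot be closed, because the lemma as stated is false. Take $\A=\{0,1\}$, $\O=\emptyset$, any regular discount, and let $\mu$ return reward $\tfrac{1}{2}$ for action $0$ and reward $0$ for action $1$, while $\nu$ returns reward $\tfrac{1}{2}$ for action $0$ and reward $1$ for action $1$; take any history consisting only of $0$-actions, so both environments are consistent with it. Then $\pi^*_\mu$ plays $0$ forever (uniquely), its play-out in $\mu$ is also consistent with $\nu$, so $\mu$ is \emph{not} $h$-different to $\nu$ for any $h$; yet $\pi^*_\nu$ plays $1$ forever (uniquely), so $V^{\pi^*_\mu}_\mu=\tfrac{1}{2}$ while $V^{\pi^*_\nu}_\mu=0$. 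This is precisely the ``peeling away'' scenario of your last paragraph: $\pi^*_\nu$ immediately leaves the $\pi^*_\mu$-trajectory onto a branch where $\mu$ and $\nu$ secretly differ and where $\pi^*_\nu$ is worthless in $\mu$. So the step you call ``the real work of the lemma'' is in fact impossible, and the paper's one-line proof glosses over exactly the hole you found.

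What \emph{is} provable---and your ``consistency check'' is already half of its proof---is the disjunctive version: if $|V^{\pi^*_\mu}_\mu(\yx^{\pi,\mu}_{<t})-V^{\pi^*_\nu}_\mu(\yx^{\pi,\mu}_{<t})|>\epsilon$, then $\mu$ is $H_t(1-\epsilon/2)$-different to $\nu$ \emph{or} $\nu$ is $H_t(1-\epsilon/2)$-different to $\mu$. Contrapositive with $h=H_t(1-\epsilon/2)$: your one-sided argument (agreement along the $\pi^*_\mu$-trajectory) gives $V^*_\mu< V^*_\nu+\epsilon/2$; the symmetric argument along the $\pi^*_\nu$-trajectory gives $V^{\pi^*_\nu}_\mu> V^{\pi^*_\nu}_\nu-\epsilon/2=V^*_\nu-\epsilon/2$; chaining the two yields $V^*_\mu-V^{\pi^*_\nu}_\mu<\epsilon$, which is the full absolute-value bound since $V^{\pi^*_\nu}_\mu\le V^*_\mu$ by Definition~\ref{defn_optimal}. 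The proof of Theorem~\ref{thm_weak_deterministic_optimal} can be repaired with this weaker conclusion: the second disjunct also forces the contradiction there, because the limiting model $\nu$ is consistent with the agent's actual history forever, while the agent plays $\pi^*_\nu$ over arbitrarily long exploration-free blocks (and the random exploration matches $\pi^*_\nu$ with the same positive probability it matches $\pi^*_\mu$), so ``$\nu$ is $h$-different to $\mu$'' would make $\nu$ inconsistent---note that my counterexample $\nu$ above is exactly the kind of model that could never survive as the limiting consistent model. So your attempt is incomplete in the same place the paper is, the missing step is genuinely unprovable for the statement as written, and the repair is either the disjunctive conclusion or an added hypothesis that $\nu$ is not $H_t(1-\epsilon/2)$-different to $\mu$ at the given history.
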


\begin{proof}
Follows from the approximation lemma.
\proofend\end{proof}

We are now ready to prove the main theorem.

\begin{proof}[\proofof Theorem \ref{thm_weak_deterministic_optimal}]
Let $\pi$ be the policy defined in Definition \ref{defn_det_asym} and
$\mu$ be the true (unknown) environment. Recall that $\nu_t = \mu_{i_t}$ with
$i_t = \min\left\{i: \mu_i \text{ consistent with history } \yx^{\pi,\mu}_{<t} \right\}$ is
the first model consistent with the history $\yx^{\pi,\mu}_{<t}$ at time $t$ and is used by $\pi$ when not exploring.
First we claim there exists a $T \in \N$ and environment $\nu$ such that $\nu_t = \nu$ for all $t \geq T$. Two facts,
\begin{enumerate}
\item If $\mu_i$ is inconsistent with history $\yx^{\pi,\mu}_{<t}$ then it is also inconsistent with $\yx^{\pi,\mu}_{<t+h}$ for
all $h \in \N$.
\item $\mu$ is consistent with $\yx^{\pi,\mu}_{<t}$ for all $\pi, t$.
\end{enumerate}
By 1) we have that the sequence $i_1, i_2, i_3, \cdots$ is monotone increasing. By 2) we have that the sequence is
bounded by $i$ with $\mu_i = \mu$. The claim follows since any bounded monotone sequence of natural numbers converges in finite time.
Let $\nu := \nu_\infty$ be the environment to which $\nu_1, \nu_2, \nu_3, \cdots$ converges to. Note that $\nu$ must be consistent
with history $\yx^{\mu,\pi}_{<t}$ for all $t$.
We now show by contradiction that the optimal policy for $\nu$ is weakly asymptotically
optimal in environment $\mu$. Suppose it were not, then
\eqn{
\label{eqn9-1} \limsup_{n\to\infty} {1\over n} \sum_{t=1}^n \left[V^*_\mu(\yx^{\pi,\mu}_{<t}) - V^{\pi^*_\nu}_\mu(\yx^{\pi,\mu}_{<t})\right] = \epsilon > 0.
}
Let $\alpha \in \B^\infty$ be defined by $\alpha_t := 1$ if and only if,
\eqn{
\label{eqn10-1} \left[V^*_\mu(\yx^{\pi,\mu}_{<t}) - V^\pi_\mu(\yx^{\pi,\mu}_{<t})\right] \geq \epsilon / 4.
}
By Lemma \ref{lem_chi} there exists (with probability one) an infinite sequence $t_1, t_2, t_3, \cdots$ for which $\chi_k = \alpha_k = 1$.
Intuitively we should view time-step $t_k$ as the start of an ``exploration'' phase where the agent explores
for $\log t_k$ time-steps.
Let $h := H_{t_k}(1 - \epsilon / 4) = \left\lceil \log(\epsilon/4)/\log\gamma \right\rceil$, which importantly is independent of $t_k$ (for geometric discounting).
Since $\log t_k \to \infty$ we will assume that $\log t_k \geq h$ for all $t_k$.
Therefore $\bar \chi_i = 1$ for all $i \in \bigcup_{k=1}^\infty [t_k, t_k + h]$.
Therefore by the definition of $\pi$, $\pi(\yx^{\pi,\mu}_{<i}) = \psi_i$ for $i \in \bigcup_{k=1}^\infty [t_k, t_k + h]$.
By Lemma \ref{lem_h_different} and Equation (\ref{eqn10-1}), $\mu$ is $h$-different to $\nu$ on history $\yx^{\pi,\mu}_{<t_k}$.
This means that if there exists a $k$ such that $\pi$ plays according to the optimal policy for $\mu$ on all time-steps
$t \in [t_k, t_k + h]$ then $\nu$ will be inconsistent with the history $\yx^{\mu,\pi}_{1:t_k + h}$, which is a contradiction.
We now show that $\pi$ does indeed play according to the optimal policy for $\mu$ for all time-steps $t \in [t_k, t_k + h]$ for at least one $k$.
Formally, we show the following holds with probability $1$ for some $k$.
\eqn{
\label{eqn11-1} \psi_{i} \equiv \pi(\yx^{\pi,\mu}_{<i}) = \pi^*_\mu(\yx^{\pi,\mu}_{<i}), \text{ for all } i \in [t_k, t_k + h].
}
Recall that $\psi \in \B^\infty$ where $\psi_i \in \B$ is identically independently distributed according to a Bernoulli($1\over 2$) distribution.
Therefore $P(\psi_i = \pi^*_\mu(\yx^{\pi,\mu}_{<i})) = {1 \over 2}$. Therefore
$p:=P(\psi_{i} = \pi^*_\mu(\yx^{\pi,\mu}_{<i}) \forall i \in [t_k, t_k + h])
= \prod_{i=t_k}^{t_k + h} P(\psi_{i} = \pi^*_\mu(\yx^{\pi,\mu}_{<i})) =
2^{-h-1} > 0$
and
$
P(\forall k \exists i \in [t_k, t_k + h] \text{ with }
\psi_{i} \neq \pi^*_\mu(\yx^{\pi,\mu}_{<i})) = \prod_{k=1}^\infty (1-p)= 0
$.
Therefore Equation (\ref{eqn11-1}) is satisfied for some $k$ with probability 1 and so Equation (\ref{eqn9-1}) leads to a contradiction. Therefore
\eqn{
\label{eqn_main0} \lim_{n\to\infty} {1\over n} \sum_{t=1}^n \left[V^*_\mu(\yx^{\pi,\mu}_{<t}) - V^{\pi_\nu^*}_\mu(\yx^{\pi,\mu}_{<t})\right] = 0.
}
We have shown that the optimal policy for $\nu$ has similar $\mu$-values to the optimal policy for $\mu$. We now show that $\pi$ acts according
to $\pi^*_\nu$ sufficiently often that it too has values close to those of the optimum policy for the true environment, $\mu$.
Let $\epsilon > 0$, $h := H_t(1-\epsilon)$ and $t \geq T$. If $\dot\chi^h_t = 0$ then by the definition of $\pi$ and the approximation lemma we obtain
\eqn{
\label{eqn_main1} \left|V_\mu^{\pi_\nu^*}(\yx^{\pi,\mu}_{<t}) - V_\mu^{\pi}(\yx^{\pi,\mu}_{<t})\right| < \epsilon.
}
Therefore
\eqn{
\label{eqn_main2} \limsup_{n\to\infty} {1\over n} \sum_{t=1}^n \left|V^{\pi^*_\nu}_\mu(\yx^{\pi,\mu}_{<t}) - V^{\pi}_\mu(\yx^{\pi,\mu}_{<t})\right| &\leq
\limsup_{n\to\infty} {1\over n} \left|\sum_{t=1}^{T-1} 1 + \sum_{t=T}^n \left[\dot\chi^h_t(1 - \epsilon) + \epsilon \right]\right| \\
\label{eqn_main3}&= \epsilon + (1 - \epsilon) \limsup_{n\to\infty} {1 \over n} \#1(\dot\chi^h_{T:n}) \\
\label{eqn_main4}&= \epsilon
}
where (\ref{eqn_main2}) follows since values are bounded in $[0, 1]$ and Equation (\ref{eqn_main1}). (\ref{eqn_main3}) follows by algebra.
(\ref{eqn_main4}) by part 1 of Lemma \ref{lem_chi}. By sending $\epsilon \to 0$,
\eqn{
\label{eqn_main5} \lim_{n\to\infty} {1\over n} \sum_{t=1}^n \left[V^{\pi^*_\nu}_\mu(\yx^{\pi,\mu}_{<t}) - V^{\pi}_\mu(\yx^{\pi,\mu}_{<t})\right] = 0.
}
Finally, combining Equations (\ref{eqn_main0}) and (\ref{eqn_main5}) gives the result.
\proofend\end{proof}
We expect this theorem to generalise without great difficulty to  discount functions satisfying $H_t(p) < c_p \log(t)$ for all $p$. There will be two key
changes. First, extend the exploration time to some function $E(t)$ with $E(t) \in O(H_p(t))$ for all $p$. Second, modify the probability of exploration
to ensure that Lemma \ref{lem_chi} remains true.

\section{Discussion}

\subsubsect{Summary}
Part 1 of Theorem \ref{thm_negative} shows that no policy can be strongly asymptotically optimal for any computable discount function.
The key insight is that strong asymptotic optimality essentially implies exploration must eventually cease. Once this occurs, the
environment can change without the agent discovering the difference and the policy will no longer be optimal.

A weaker notion of asymptotic optimality, that a policy be optimal on average in the limit, turns out to be more interesting. Part 2 of Theorem
\ref{thm_negative} shows that no weak asymptotically optimal policy can be computable. We should not be surprised by this
result. Any computable policy can be used to construct a computable environment in which that policy does very badly. Note that by
computable here we mean deterministic and computable. There may be computable stochastic policies that are weakly asymptotically optimal, but
we feel this is unlikely.

Part 3 of Theorem \ref{thm_negative}, shows that even weak asymptotically optimal policies need not exist if the discount
function is sufficiently far-sighted.
On the other hand, Theorem \ref{thm_weak_deterministic_optimal} shows that weak asymptotically optimal policies do exist for some discount rates,
in particular, for the default geometric discounting.
These non-trivial and slightly surprising result shows that choice of discount function is crucial to the existence of
weak asymptotically optimal policies.
Where weak asymptotically optimal policies do exist, they must explore infinitely often and in increasing contiguous bursts of exploration where
the length of each burst is dependent on the discount function.

\subsubsect{Consequences}
It would appear that Theorem \ref{thm_negative} is problematic for artificial general intelligence. We cannot construct
incomputable policies, and so we cannot construct weak asymptotically optimal policies. However, this is not as problematic as it may seem. There are
a number of reasonable counter arguments:
\begin{enumerate}
\item We may be able to make stochastically computable policies that are asymptotically optimal. If the existence of true random noise is assumed
then this would be a good solution.
\item The counter-example environment constructed in part 2 of Theorem \ref{thm_negative} is a single environment roughly as complex as the policy
itself. Certainly, if the world were adversarial this would be a problem, but in general this appears not to be the case.
On the other hand, if the environment is a learning agent itself, this could result in a complexity arms race without bound.
There may exist a computable weak asymptotically optimal policy in some extremely large class of environments. For example,
the algorithm of Section 4 is stochastically computable when the class of environments is recursively enumerable and contains only computable
environments. A natural (and already quite large) class satisfying these properties is finite-state Markov decision processes with
$\left\{0, 1\right\}$-valued transition functions and rational-valued rewards.
\item While it is mathematically pleasant to use asymptotic behaviour to characterise optimal general intelligent behaviour, in practise we usually care
about more immediate behaviour. We expect that results, and even (parameter free) formal definitions of intelligence satisfying this need will
be challenging, but worthwhile.
\item Accept that even weak asymptotic optimality is too strong and find something weaker, but still useful.
\end{enumerate}

\subsubsect{Relation to AIXI}
The policy defined in Section 4 is not equivalent to AIXI \cite{Hut04}, which is also incomputable. However, if the computable environments in $\Mc$ are ordered by complexity
then it is likely the two will be quite similar. The key difference is the policy defined in this paper will continue to explore whereas
it was shown in \cite{Ors10} that AIXI eventually ceases exploration in some environments and some histories. We believe, and a
proof should not be too hard, that AIXI will become weakly asymptotically optimal if an exploration component is added similarly as in Section 4.

We now briefly compare the self-optimising property in \cite{Hut02} to strong asymptotic optimality.
A policy $\pi$ is self-optimising in a class $\Mc$ if
$\lim_{t\to\infty} \left[V^*_\mu(\yx_{<t}) - V_\mu^\pi(\yx_{<t})\right] = 0$ for any infinite history $\yx_{1:\infty}$
and for all $\mu \in \Mc$.
This is similar to strong asymptotic optimality, but convergence must be on all histories, rather than
the histories actually generated by $\pi$. This makes the self-optimising property a substantially stronger form of
optimality than strong asymptotic optimality. It has been proven that if there exists self-optimising policy for a
particular class, then AIXI is also self-optimising in that class \cite{Hut02}.

It is possible to define a weak version of the self-optimising property by insisting that
$\lim_{n\to\infty} {1 \over n} \sum_{t=1}^n \left[V^*_\mu(\yx_{<t}) - V_\mu^\pi(\yx_{<t})\right] = 0$ for all $\yx_{1:\infty}$
and all $\mu \in \Mc$. It can then be proven that the existence of a weak self-optimising policy would imply that AIXI
were also weakly self-optimising. However, the policy defined in Section 4 cannot be modified to have the weak
self-optimising property. It must be allowed to choose its actions itself. This is consistent with the work
in \cite{Ors10} which shows that AIXI cannot be weakly asymptotically optimal, and so cannot be weak self-optimising
either.

\subsubsect{Discounting}
Throughout this paper we have assumed rewards to be discounted according to a summable discount function. A very natural alternative to discounting,
suggested in \cite{HL07}, is to restrict interest to environments satisfying $\sum_{k=1}^\infty r^{\mu,\pi}_k \leq 1$. Now the goal of the
agent is simply to maximise summed rewards. In this setting it is easy to see that the positive theorem is lost while all negative ones still hold!
This is unfortunate, as discounting presents a major philosophical challenge. How to choose a discount function?

\subsubsect{Assumptions/Limitations}
Assumption \ref{assumption1} ensures that $\A$ and $\O$ are finite. All negative results go through for countable $\A$ and $\O$. The optimal
policy of Section 4 may not generalise to countable $\A$.
We have also assumed bounded reward and discrete time. The first seems reasonable while the second allows for substantially easier analysis.
Additionally we have only considered deterministic computable environments. The stochastic case is unquestionably interesting.
We invoked Church thesis to assert that computable stochastic environments are essentially the largest class of interesting environments.

Many of our Theorems are only applicable to computable discount functions. All well-known discount function in use today are computable. However
\cite{Hut04} has suggested $\gamma_t = 2^{-K(t)}$, where $K(t)$ is the (incomputable) prefix Kolmogorov complexity of $t$, may have nice theoretical
properties.

\subsubsect{Open Questions}
A number of open questions have arisen during this research. In particular,
\begin{enumerate}
\item Prove Theorem \ref{thm_negative} for a larger class of discount functions.
\item Prove or disprove the existence of a weak asymptotically optimal stochastically computable policy for some discount function in the
class of deterministic computable environments.
\item Modify the policy of Section 4 to the larger class of stochastically computable environments. We believe this to be
possible along the lines of \cite{HR08}, but inevitably the analysis will be messy and complex.
\item Extend Part 3 of Theorem \ref{thm_negative} and Theorem \ref{thm_weak_deterministic_optimal} to a complete classification of
discount functions according
to whether or not they admit a weak asymptotically optimal policy in the class of computable environments.
\item Prove that AIXI is weakly asymptotically optimal when augmented with an exploration component as in Section 4.
\item Define and study other formal measures of optimality/intelligence.
\end{enumerate}

\subsubsect{Acknowledgements}
We thank Laurent Orseau, Wen Shao and reviewers for valuable
feedback on earlier drafts and the Australian Research Council
for support under grant DP0988049.


\begin{small}

\end{small}

\appendix
\section{Technical Proofs}

\begin{lemma}\label{lem_tech1}
Let $A = \left\{a_1, a_2, \cdots, a_n\right\}$ with $a \in [0, 1]$ for all $a \in A$. If ${1\over n} \sum_{a \in A} a \geq \epsilon$ then
\eq{
\left|\left\{a \in A : a \geq {\epsilon \over 2} \right\}\right| > n {\epsilon \over 2}
}

\end{lemma}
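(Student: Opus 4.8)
The plan is to run a straightforward averaging (Markov-type) counting argument. First I would partition $A$ into the \emph{large} elements $L := \{a \in A : a \geq \epsilon/2\}$ and the \emph{small} elements $S := \{a \in A : a < \epsilon/2\}$, and set $k := |L|$, so that $|S| = n - k$. The goal is then to show $k > n\epsilon/2$, i.e. that $L$ is the set whose cardinality the lemma lower-bounds.

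The core estimate bounds the total sum from above by splitting it across the two pieces. Since every element lies in $[0,1]$, each large element contributes at most $1$, while each small element contributes at most $\epsilon/2$; this gives $\sum_{a \in A} a \leq k + |S|\cdot\frac{\epsilon}{2} \leq k + n\frac{\epsilon}{2}$. Combining with the hypothesis $\sum_{a\in A} a \geq n\epsilon$ yields $n\epsilon \leq k + n\frac{\epsilon}{2}$, and rearranging isolates $k \geq n\frac{\epsilon}{2}$.

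The main obstacle is upgrading this to the \emph{strict} inequality demanded by the conclusion. I would recover strictness from the fact that the small elements satisfy $a < \epsilon/2$ \emph{strictly}: whenever $S \neq \emptyset$ the bound $\sum_{a\in S} a < |S|\frac{\epsilon}{2} \leq n\frac{\epsilon}{2}$ is strict, which turns the chain above into $n\epsilon < k + n\frac{\epsilon}{2}$ and hence $k > n\frac{\epsilon}{2}$. The only remaining case is $S = \emptyset$, i.e. $k = n$, where the strictness in $S$ is unavailable; here I would argue separately using $\epsilon \leq \frac{1}{n}\sum_{a\in A} a \leq 1$, so that $n\frac{\epsilon}{2} \leq \frac{n}{2} < n = k$. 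Handling this edge case completes the argument.
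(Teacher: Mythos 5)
Your proof is correct and takes essentially the same route as the paper's: partition $A$ into the elements $\geq \epsilon/2$ and those $< \epsilon/2$, bound the former by $1$ each and the latter by $\epsilon/2$ each, and rearrange against the hypothesis $\sum_{a\in A} a \geq n\epsilon$. Your explicit recovery of the \emph{strict} inequality (strictness of the bound on a nonempty small part, plus the separate $S=\emptyset$ edge case using $\epsilon \leq 1$) is in fact more careful than the paper, which compresses this step into ``by rearranging and algebra.''
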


\begin{proof}
Let $A_{>} := \left\{a \in A : a \geq {\epsilon \over 2}\right\}$ and
$A_{<} := A - A_{>}$. Therefore
\eq{
n\epsilon &\leq \sum_{a\in A} a = \sum_{a\in A_{<}} a + \sum_{a\in A_{>}} a \\
& \leq \sum_{a\in A_{<}} {\epsilon \over 2} + \sum_{a \in A_{>}} 1 \\
& = |A_{<}| {\epsilon \over 2} + |A_{>}|
}
By rearranging and algebra, $\left|\left\{ a \in A : a \geq {\epsilon \over 2} \right\} \right| \equiv |A_{>}| > n{\epsilon \over 2}$ as required.
\proofend\end{proof}

\begin{proof}[\proofof Lemma \ref{lem_tech2}] First,
\eqn{
\prod_{i=1}^\infty\left[1-{\alpha_i \over i} \right] & \leq \exp\left[-\sum_{i=1}^\infty {\alpha_i \over i} \right]
\label{eqn13-1}
}
Equation (\ref{eqn13-1}) follows since $1 - a \leq \exp(-a)$ for all $a$.

Now since $\limsup_{n\to\infty} {1 \over n} \sum_{i=1}^n a_n = \epsilon$, we have for any $N$ there exists an $n > N$
such that ${1 \over n} \sum_{i=1}^n a_n > {\epsilon \over 2}$.
Let $n_1 = 0$ then inductively choose
$n_i = \min\left\{n: n > {8(n_{i-1}+1) \over \epsilon} \wedge {1 \over n} \sum_{i=1}^n a_i > {\epsilon \over 2} \right\}$

By Lemma \ref{lem_tech1},
\eqn{
\label{eqn-tech2-1} \left|\left\{i \leq n_j : a_i \geq {\epsilon \over 4}\right\}\right| \geq n_j{\epsilon \over 4}
}
Therefore
\eqn{
\label{eqn-tech2-3} \sum_{i=n_j+1}^{n_{j+1}} {\alpha_i \over i} &\geq \sum_{i=n_{j+1} - {\epsilon \over 4} n_{j+1} + n_j + 1}^{n_{j+1}} {1 \over n_{j+1}} \\
\label{eqn-tech2-4} &\geq \sum_{i=(1 - {\epsilon \over 8})n_{j+1}}^{n_{j+1}} {1 \over n_{j+1}} = {\epsilon \over 8}
}
Equation (\ref{eqn-tech2-3}) follows from (\ref{eqn-tech2-1}) and because $1 \over i$ is a decreasing function. (\ref{eqn-tech2-4}) follows from
the definition of $n_j$ and algebra. Therefore
\eqn{
\nonumber \sum_{i=1}^\infty {\alpha_i \over i} &= \lim_{k\to\infty} \sum_{j=1}^k \sum_{i=n_j+1}^{n_{j+1}} {\alpha_i \over i} \\
\label{eqn-tech2-6} &\geq \lim_{k\to\infty} \sum_{j=1}^k {\epsilon \over 8} = \infty
}
Finally, substituting Equation (\ref{eqn-tech2-6}) into (\ref{eqn13-1}) gives,
\eq{
\prod_{i=1}^\infty \left[1 - {\alpha_i \over i}\right] = 0
}
as required.
\proofend\end{proof}

\section{Table of Notation}

\begin{tabular}{|p{1.6cm} | p{12.5cm}|}
\hline
{\bf Symbol} & {\bf Description} \\
$\A$ & Set of possible actions \\
$\O$ & Set of possible observations \\
$\Rc$ & Set of possible rewards \\
$\mu, \nu$ & Environments \\
$y$ & An action. \\
$x$ & An observation/reward pair \\
$r$ & A reward \\
$o$ & An observation \\
$\ind{expr}$ & The delta function. $\ind{expression} = 1$ if $expression$ is true and $0$ otherwise. \\
$\neg b$ & The {\it not} function. $\neg 0 = 1$ and $\neg 1 = 0$. \\
$\pi$ & A policy. \\
$\chi$ & An infinite binary string. $\chi_k = 1$ if an agent starts exploring at time-step $k$. \\
$\bar \chi$ & An infinite binary string. $\bar \chi_k = 1$ if an agent is exploring at time-step $k$. \\
$\dot \chi^h$ & An infinite binary string. $\dot \chi^h_k = 0$ if an agent will not explore for the next $h$ time-steps. \\
$\alpha$ & An infinite binary string. \\
$\psi$ & An infinite random binary string sampled from the coin flip measure. \\
$t, n, i, j, k$ & Time indices. \\
$\yx_{<t}$ & A sequence of action/observation/reward sequences. Splits into $y_1o_1r_1 y_2o_2r_2\cdots y_{t-1}o_{t-1}r_{t-1}$.\\
$\yx^{\mu,\pi}_{<t}$ & The sequence of action/reward/observations seen in deterministic environment $\mu$ when playing policy $\pi$. \\
$V^\pi_\mu(\yx_{<t})$ & The value of playing policy $\pi$ in environment $\mu$ starting at history $\yx_{<t}$. \\
$V^*_\mu(\yx_{<t})$ & The value of playing the optimum policy $\pi$ in environment $\mu$ starting at history $\yx_{<t}$. \\
$\pi^*_\mu$ & The optimum policy in environment $\mu$. \\
$H_t(p)$ & The $p$-percentile horizon. \\
$0^{h+1}$ & A binary string consisting of $h+1$ zeros. \\
\hline
\end{tabular}

\end{document}